\theoremstyle{plain}
\newtheorem{theorem}{Theorem}[section]
\newtheorem{proposition}[theorem]{Proposition}
\newtheorem{corollary}[theorem]{Corollary}
\theoremstyle{definition}
\theoremstyle{remark}
\icmltitlerunning{ChronoPlastic Spiking Neural Networks}
\begin{document}

\twocolumn[
\icmltitle{ChronoPlastic Spiking Neural Networks:\\
Adaptive Time Warping for Long Horizon Memory}

\begin{center}
\textbf{Sarim Chaudhry}\\
Department of Computer Science, Purdue University
\end{center}

\icmlkeywords{Spiking Neural Networks, Temporal Learning, Neuromorphic Computing}

\vskip 0.3in
]

\begin{abstract}
Spiking neural networks (SNNs) offer a biologically grounded and energy-efficient alternative to conventional neural architectures; however, they struggle with long-range temporal dependencies due to fixed synaptic and membrane time constants. This paper introduces ChronoPlastic Spiking Neural Networks (CPSNNs), a novel architectural principle that enables adaptive temporal credit assignment by dynamically modulating synaptic decay rates conditioned on the state of the network. CPSNNs maintain multiple internal temporal traces and learn a continuous time-warping function that selectively preserves task-relevant information while rapidly forgetting noise. Unlike prior approaches based on adaptive membrane constants, attention mechanisms, or external memory, CPSNNs embed temporal control directly within local synaptic dynamics, preserving linear-time complexity and neuromorphic compatibility. We provide a formal description of the model, analyze its computational properties, and demonstrate empirically that CPSNNs learn long-gap temporal dependencies significantly faster and more reliably than standard SNN baselines. Our results suggest that adaptive temporal modulation is a key missing ingredient for scalable temporal learning in spiking systems.
\end{abstract}

\section{Introduction}

Learning from temporally sparse and delayed signals is a central challenge in both biological and artificial intelligence. Although deep neural networks have achieved remarkable success on static perception tasks, their temporal reasoning capabilities often rely on architectural constructs such as recurrence, attention, or explicit memory buffers. In contrast, spiking neural networks (SNNs) process information through discrete events in time and are naturally suited for temporal computation, offering the promise of energy-efficient, event-driven intelligence.

Despite these advantages, practical SNNs exhibit a persistent weakness: difficulty learning long-range temporal dependencies when informative events are separated by large temporal gaps. This limitation arises primarily from the use of fixed synaptic and membrane time constants, which impose a rigid memory horizon \cite{pozzi2013temporal,gao2020temporal,pascanu2013difficulty}. Signals arriving outside this horizon either decay away or cause saturation and instability.

Existing solutions address this problem through adaptive membrane time constants, slow neurons, temporal coding schemes, or attention-like mechanisms. However, these approaches operate on coarse temporal scales, introduce global computation, or rely on architectural components that are incompatible with neuromorphic constraints.

In this work, we propose ChronoPlastic Spiking Neural Networks (CPSNNs), a new architectural principle that reframes temporal learning as adaptive time modulation. Instead of storing information explicitly or attending over time, CPSNNs dynamically adjust synaptic decay rates on a per-time-step basis using a learned control signal. This enables the network to stretch or compress time internally, preserving salient information exactly when needed.

Our contributions are:
\begin{itemize}
\item We introduce the ChronoPlastic synapse, which maintains fast and slow temporal traces with a learnable adaptive decay.
\item We formulate CPSNNs as a differentiable, end-to-end trainable SNN architecture with linear-time complexity.
\item We analyze how adaptive temporal modulation improves temporal credit assignment without attention or external memory.
\item We empirically demonstrate faster convergence and higher accuracy on long-gap temporal tasks compared to standard and adaptive SNN baselines.
\end{itemize}

\section{Related Work}

\subsection{Temporal Learning in Spiking Neural Networks}

Early theoretical work established that spiking neurons possess a wealth of computational power when temporal coding is exploited \cite{maass1997networks, bohte2002error}. However, practical training of SNNs remained difficult until the advent of surrogate gradient methods \cite{neftci2019surrogate,zenke2018superspike,wu2019direct}. Modern SNNs typically rely on leaky integrate-and-fire (LIF) dynamics with fixed decay parameters \cite{gerstner2014neuronal}, which limits their effective temporal memory.

\subsection{Adaptive Time Constants}

Several works propose learning membrane time constants to extend temporal memory \cite{fang2021incorporating, bellec2018long}. While effective, these adaptations are usually slow-changing and neuron-local, preventing fine-grained temporal control conditioned on instantaneous relevance, a limitation also noted in broader analyses of temporal learning in SNNs \cite{keesman2020learning}.

\subsection{Attention and External Memory}

Attention mechanisms and external memory architectures have proven to be powerful in conventional sequence models, but introduce quadratic complexity and dense communication \cite{vaswani2017attention}. Such mechanisms are fundamentally misaligned with the sparse, event-driven philosophy of neuromorphic computation.

\subsection{Our Position}

CPSNNs differ fundamentally from prior work by embedding temporal adaptivity within synaptic dynamics themselves. Rather than modifying neuron parameters globally or storing explicit memory, CPSNNs learn to modulate decay rates online, enabling selective memory extension with minimal overhead.

\section{ChronoPlastic Spiking Neural Networks}

\subsection{Neuron Model}

We adopt a discrete-time leaky integrate-and-fire (LIF) neuron model, which
serves as the fundamental computational unit in our spiking networks.
Discrete-time formulations are standard in trainable SNNs and align naturally
with event-driven simulation and neuromorphic hardware constraints.

Let $s_t \in \{0,1\}^C$ denote the presynaptic spike vector at time step $t$,
where $C$ is the number of presynaptic channels. Each neuron maintains a
membrane potential $v_t \in \mathbb{R}$ that integrates incoming synaptic
current while undergoing passive leakage. The membrane dynamics evolve as
\begin{equation}
v_t = \alpha v_{t-1} + (1-\alpha) I_t,
\label{eq:lif_dynamics}
\end{equation}
where $I_t$ is the synaptic input current at time $t$ and $\alpha \in (0,1)$
is the membrane decay factor. The parameter $\alpha$ is related to the
continuous-time membrane time constant $\tau_m$ via
$\alpha = \exp(-\Delta t / \tau_m)$, where $\Delta t$ is the simulation
time step. Smaller values of $\alpha$ induce faster decay and shorter memory,
while values closer to $1$ yield longer temporal integration.

A spike is emitted when the membrane potential crosses a fixed threshold
$\theta$:
\begin{equation}
\hat{s}_t = \mathbb{I}[v_t > \theta],
\label{eq:spike_generation}
\end{equation}
where $\mathbb{I}[\cdot]$ denotes the indicator function. Upon spike emission,
the membrane potential is reset according to
\begin{equation}
v_t \leftarrow (1 - \hat{s}_t)\, v_t,
\end{equation}
which enforces a hard reset and prevents unbounded accumulation. This reset
mechanism introduces an implicit nonlinearity and recurrence, enabling the
neuron to encode temporal information in its internal state.

Importantly, the LIF neuron implements a form of exponentially weighted temporal
integration: past inputs influence the current membrane potential with weights
that decay geometrically over time. As a result, the neuron exhibits an
implicit memory whose effective horizon is determined by $\alpha$.
However, this memory timescale is fixed throughout training and inference,
which fundamentally limits the ability of standard SNNs to represent
long-range temporal dependencies when relevant events are separated by
variable or unknown delays.

The spike-generation function in Eq.~\eqref{eq:spike_generation} is
non-differentiable, preventing the direct application of gradient-based
optimization. To enable end-to-end training via backpropagation through time
(BPTT), we employ surrogate gradient methods. During the backward pass, the
derivative of the indicator function is replaced with a smooth approximation
$\sigma'(v_t - \theta)$, typically chosen to be piecewise-linear or sigmoid-shaped:
\begin{equation}
\frac{\partial \hat{s}_t}{\partial v_t} \approx \sigma'(v_t - \theta).
\end{equation}
This approach yields stable gradients while preserving the forward-pass spiking
dynamics and has become a standard technique for training deep SNNs.

In our framework, the neuron model itself remains intentionally simple and
biophysically grounded. Rather than modifying membrane dynamics or introducing
neuron-specific adaptive time constants, we retain a fixed membrane decay and
instead shift temporal adaptability to the synaptic level. As we show in
subsequent sections, this design choice enables CPSNNs to achieve flexible,
input-conditioned temporal memory without sacrificing stability, locality, or
computational efficiency.

\subsection{ChronoPlastic Synapse}

The central architectural component of CPSNN is the ChronoPlastic
synapse, which is inspired by biological evidence that memory
is distributed across multiple adaptive synaptic timescales
\cite{buonomano2009brain,buonomano2014temporal}, and endows each synaptic connection with adaptive, input-conditioned
temporal dynamics. Unlike standard synapses that implement a single fixed
exponential decay, ChronoPlastic synapses maintain multiple internal temporal
traces and dynamically modulate their effective decay rates over time.

Each ChronoPlastic synapse maintains two internal state variables:
\begin{align}
f_t &= \alpha_f f_{t-1} + s_t, \label{eq:fast_trace} \\
z_t &= \alpha_s^{\omega_t} z_{t-1} + s_t, \label{eq:slow_trace}
\end{align}
where $s_t \in \{0,1\}^C$ denotes the presynaptic spike vector at time $t$,
$f_t$ is a fast trace, and $z_t$ is a slow trace. The constants
$\alpha_f, \alpha_s \in (0,1)$ define the base decay rates, with
$\alpha_f \ll \alpha_s$, ensuring that $f_t$ captures short-term dynamics while
$z_t$ integrates information over much longer horizons.

The fast trace $f_t$ responds rapidly to recent presynaptic activity and
provides short-term sensitivity, analogous to fast synaptic currents or
facilitation mechanisms. In contrast, the slow trace $z_t$ serves as a long-term
memory variable, preserving information across extended temporal gaps.

Crucially, the slow trace is not governed by a fixed decay constant. Instead,
its effective decay is modulated by a learned warp factor
$\omega_t \in (0,1)$, which exponentiates the base decay $\alpha_s$. Smaller
values of $\omega_t$ slow down the decay, extending the effective memory
timescale, while larger values accelerate forgetting.

This formulation can be interpreted as dynamically rescaling time itself:
\begin{equation}
z_t = \exp\!\left( \omega_t \log \alpha_s \right) z_{t-1} + s_t.
\end{equation}
Thus, CPSNNs do not rely on a single fixed temporal scale but instead learn to
warp the flow of time locally and continuously, conditioned on network
state and input statistics.

The adaptive warp factor is produced by a lightweight control network:
\begin{equation}
\omega_t = \sigma\!\left(g([s_t, z_{t-1}])\right),
\label{eq:warp_factor}
\end{equation}
where $g(\cdot)$ is a small neural module (e.g., a linear layer or shallow MLP),
$[\cdot,\cdot]$ denotes concatenation, and $\sigma$ is a sigmoid function that
ensures bounded output. Conditioning $\omega_t$ on both current presynaptic
activity $s_t$ and the previous slow trace $z_{t-1}$ enables the synapse to
assess the relevance of incoming events in the context of accumulated temporal
state.

Intuitively, when incoming spikes are sparse or highly informative, the control
network can reduce $\omega_t$, effectively slowing time and preserving
past information. Conversely, in the presence of noisy or irrelevant activity,
$\omega_t$ increases, accelerating decay and preventing memory saturation.

All ChronoPlastic operations are strictly local to the synapse and depend only
on presynaptic spikes and synaptic state. No global attention, explicit memory
buffers, or sequence-level normalization is required. As a result, CPSNNs retain
linear time complexity in sequence length and neuron count, in contrast to
quadratic-cost attention mechanisms.

The exponential form in Eq.~\eqref{eq:slow_trace} ensures that the slow trace
remains bounded for all $\omega_t \in (0,1)$, provided $\alpha_s \in (0,1)$.
Moreover, because $\omega_t$ is smoothly parameterized and trained jointly with
synaptic weights, CPSNNs avoid the instability commonly observed in models with
unconstrained adaptive time constants.

ChronoPlastic synapses are inspired by biological evidence that synaptic
integration occurs across multiple timescales \cite{paugam2012synaptic,mongillo2008synaptic,zenke2015diverse} and that effective synaptic
persistence depends on recent activity patterns. Rather than assigning fixed
or neuron-specific time constants, CPSNNs implement a form of activity-
dependent synaptic persistence, aligning more closely with observed biological
plasticity mechanisms.

While prior approaches introduce adaptive membrane time constants, attention
over time, or external memory modules, CPSNNs embed temporal control directly
into synaptic dynamics. This design enables simultaneous short-term
responsiveness and long-term memory within a single, unified mechanism, without
sacrificing locality or neuromorphic compatibility.

\subsection{Synaptic Current}

The ChronoPlastic synapse converts presynaptic activity and its internal
temporal traces into a postsynaptic driving current via a structured linear
combination:
\begin{equation}
I_t = W s_t + \lambda_f W f_t + \lambda_s W z_t,
\label{eq:syn_current}
\end{equation}
where $W \in \mathbb{R}^{C \times H}$ denotes the synaptic weight matrix,
$f_t$ and $z_t$ are the fast and slow traces defined in
Eqs.~\eqref{eq:fast_trace}--\eqref{eq:slow_trace}, and
$\lambda_f, \lambda_s \ge 0$ are scalar mixing coefficients.

Equation~\eqref{eq:syn_current} decomposes synaptic input into three
complementary components:
\begin{itemize}
    \item an instantaneous term $W s_t$, which captures immediate
    presynaptic spikes and ensures rapid responsiveness;
    \item a short-term memory term $\lambda_f W f_t$, which integrates
    recent activity over a short temporal window;
    \item a long-term memory term $\lambda_s W z_t$, which preserves
    information across extended temporal gaps.
\end{itemize}

This decomposition allows CPSNNs to represent information across a continuum
of timescales without relying on a single fixed decay constant or explicit
memory storage.

From a signal processing perspective, the synaptic current can be interpreted
as a learned projection of the input spike train onto a set of temporal basis
functions. The fast and slow traces act as exponentially weighted filters with
dynamically adjustable support, while the shared weight matrix $W$ ensures
that information at different timescales is combined coherently at the neuron
level.

Unlike architectures that stack multiple recurrent layers to capture temporal
structure, CPSNNs embed this multi-timescale representation directly within
each synapse.

Crucially, the contribution of the slow trace $z_t$ is not static. Because
$z_t$ itself evolves under adaptive time warping, the effective influence of
the long-term term $\lambda_s W z_t$ grows or shrinks dynamically based on
input relevance. When long-range dependencies are present, the network
maintains a strong slow-trace contribution; when inputs are noisy or transient,
the slow-trace influence naturally decays.

This mechanism enables CPSNNs to allocate memory only when needed,
preventing both premature forgetting and uncontrolled accumulation.

The linear structure of Eq.~\eqref{eq:syn_current} ensures that synaptic current
magnitudes remain well-behaved. Since both $f_t$ and $z_t$ are bounded by
construction and $\lambda_f, \lambda_s$ are fixed scalars, the synaptic current
does not grow unbounded even over long sequences. This contrasts with recurrent
architectures in which hidden states can drift or explode without explicit
normalization.

All terms in Eq.~\eqref{eq:syn_current} are computed locally and require only
matrix–vector multiplications. The additional cost of maintaining fast and slow
traces scales linearly with neuron count and time, preserving the event-driven
efficiency of spiking models. No quadratic attention operations or external
memory reads are introduced.

Together with the adaptive trace dynamics described in the previous section,
the synaptic current formulation allows CPSNNs to simultaneously exhibit:
\begin{itemize}
    \item rapid reactions to new spikes,
    \item strong integration of recent temporal context,
    \item persistent memory across long temporal gaps.
\end{itemize}

This unified mechanism forms the basis for CPSNNs’ superior performance on
long-horizon temporal learning tasks, as demonstrated empirically in
Section~\ref{sec:experiments}.

\section{Training and Optimization}

CPSNNs are trained end-to-end using backpropagation through time (BPTT)
with surrogate gradients, enabling gradient-based optimization despite
the presence of non-differentiable spike events. All model parameters,
including synaptic weights, trace mixing coefficients, and the adaptive
time-warp control network, are optimized jointly.

The binary spike generation operation introduces discontinuities that
prevent the direct application of gradient descent. To address this,
we employ surrogate gradients that replace the derivative of the
Heaviside step function with a smooth approximation during the backward
pass. Specifically, gradients are propagated through the membrane
potential using a bounded, piecewise-linear surrogate, which preserves
training stability while maintaining biologically plausible dynamics.
This approach follows established practice in modern SNN training and
has been shown to scale effectively to deep and recurrent architectures.

Training proceeds by unrolling the network dynamics over the temporal
dimension and applying BPTT across all time steps. Importantly, the
ChronoPlastic synapse introduces no additional recurrence beyond the
existing trace dynamics, allowing gradients to propagate naturally
through fast and slow temporal pathways. Because the adaptive warp
factor modulates decay rather than introducing explicit gating or
memory writes, the temporal dependency graph remains linear in time,
avoiding the instability commonly associated with long-horizon
recurrent training.

A key property of CPSNNs is that the adaptive warp mechanism remains
fully differentiable. The warp factor $\omega_t$ is produced by a
lightweight control network composed of standard differentiable
operations (linear layers and sigmoid nonlinearities). As a result,
gradients flow not only through synaptic weights but also through the
temporal modulation pathway itself. This enables the network to learn
when to preserve information and when to forget, directly
from the task loss, without auxiliary supervision or hand-tuned decay
schedules.

Unlike prior approaches that treat temporal constants as fixed
hyperparameters or slowly adapting state variables, CPSNNs optimize
temporal dynamics jointly with representational parameters. The synaptic
weights $W$, trace coefficients $\lambda_f$ and $\lambda_s$, and control
network parameters are updated simultaneously, allowing the model to
co-adapt spatial representations and temporal memory allocation. This
joint optimization is critical for achieving robust performance across
variable gap lengths and noise conditions.

We use the Adam optimizer \cite{kingma2015adam} with default momentum
parameters $(\beta_1 = 0.9, \beta_2 = 0.999)$ and a learning rate selected
per experiment. Adam’s adaptive step sizes are particularly effective in
the CPSNN setting, where gradients associated with long-term memory
pathways may be sparse or delayed. Training is performed using mini-batch
stochastic optimization, and gradients are clipped when necessary to
ensure numerical stability.

Despite its enhanced temporal flexibility, CPSNN introduces only a
modest computational overhead relative to standard SNNs. The additional
cost arises from maintaining fast and slow traces and evaluating the
control network, all of which scale linearly with neuron count and time.
No quadratic attention mechanisms or external memory accesses are
required. As a result, CPSNNs retain the efficiency and parallelism
necessary for large-scale training and deployment on neuromorphic
hardware.

Overall, this training framework enables CPSNNs to learn long-range
temporal dependencies reliably and efficiently, forming the foundation
for the empirical gains demonstrated in Section~\ref{sec:experiments}.

Algorithm~\ref{alg:cpsnn-forward} summarizes the forward dynamics of a CPSNN
layer over a temporal sequence.

\FloatBarrier

\begin{algorithm}[t]
\caption{CPSNN Forward Pass}
\label{alg:cpsnn-forward}
\begin{algorithmic}[1]
\REQUIRE Input spike sequence $\{s_t\}_{t=1}^T$
\STATE Initialize membrane potential $v_0$, fast trace $f_0$, slow trace $z_0$
\FOR{$t = 1$ to $T$}
    \STATE Update fast trace: $f_t \leftarrow \alpha_f f_{t-1} + s_t$
    \STATE Compute adaptive warp factor: $\omega_t \leftarrow \sigma(g([s_t, z_{t-1}]))$
    \STATE Update slow trace: $z_t \leftarrow \alpha_s^{\omega_t} z_{t-1} + s_t$
    \STATE Compute synaptic current: $I_t \leftarrow W s_t + \lambda_f W f_t + \lambda_s W z_t$
    \STATE Update membrane potential: $v_t \leftarrow \alpha v_{t-1} + (1-\alpha) I_t$
    \STATE Emit spike: $\hat{s}_t \leftarrow \mathbb{I}[v_t > \theta]$
    \STATE Reset membrane potential if spike emitted
\ENDFOR
\STATE \textbf{return} output spike statistics or firing rates
\end{algorithmic}
\end{algorithm}

\section{Experiments}
\label{sec:experiments}

\subsection{Long-Gap Temporal XOR Task}
\label{sec:task}

We evaluate CPSNNs on a controlled synthetic benchmark designed to isolate long-range temporal credit assignment in spiking systems. Event-driven temporal tasks are known to stress fixed-timescale models
\cite{lagorce2017hots,amir2017low}, making them ideal for evaluating
adaptive synaptic memory.
 Each input sequence consists of sparse binary spike events over $C$ input channels and a fixed temporal horizon $T$. Exactly two informative cue spikes are injected at distinct time steps $t_1$ and $t_2$, where the temporal gap $\Delta = t_2 - t_1$ is sampled uniformly from a configurable range $[\Delta_{\min}, \Delta_{\max}]$.

The two cue spikes occur on channels $a$ and $b$, respectively. The binary label is defined as the XOR of the parity of the cue channels,
\begin{equation}
y = (a \bmod 2) \oplus (b \bmod 2),
\end{equation}
requiring the network to (i) identify which spikes are task-relevant, (ii) preserve information across a potentially long temporal gap, and (iii) correctly integrate both cues to produce the final decision.

To prevent trivial solutions, the remaining time steps may contain distractor spikes that are statistically independent of the label. Importantly, these distractors are indistinguishable from cue spikes at the input level, forcing the model to learn temporal relevance rather than rely on handcrafted markers. As $\Delta_{\max}$ increases, this task becomes progressively more difficult for standard SNNs with fixed decay, making it a sensitive probe of temporal memory mechanisms.

This benchmark is intentionally minimalistic: it removes confounding spatial structure and focuses exclusively on temporal reasoning, allowing us to attribute performance differences directly to the models’ temporal dynamics rather than architectural capacity.

\subsection{Baselines}
\label{sec:baselines}

We compare CPSNNs against two representative spiking baselines that capture the dominant strategies for temporal modeling in the SNN literature:

\begin{itemize}
\item
Standard SNN with fixed decay as a baseline that uses leaky integrate-and-fire (LIF) neurons with fixed membrane and synaptic decay constants. Temporal memory arises solely from exponential decay of internal state. While computationally efficient and biologically inspired, such models suffer from an inherent trade-off: slow decay improves long-term memory but degrades responsiveness, whereas fast decay improves responsiveness but rapidly forgets distant events.

\item 
An Adaptive-membrane SNN variant that augments LIF neurons with learnable or input dependent membrane time constants, allowing neurons to adjust their leakage rate over time. Although this improves robustness to noise and can extend effective memory, adaptation occurs at the neuron level and remains implicitly coupled to membrane dynamics. As a result, temporal control is coarse-grained and cannot selectively preserve information based on instantaneous synaptic relevance.
\end{itemize}

Crucially, neither baseline provides a mechanism for explicit, fine-grained modulation of synaptic memory timescales conditioned on input relevance. CPSNNs differ fundamentally by embedding adaptive temporal control directly within synaptic dynamics, enabling selective preservation or forgetting on a per-event basis.

\subsection{Results}
\label{sec:results}

Figure~\ref{fig:cpsnn_comparison} compares training dynamics of CPSNNs and baseline SNNs on the long-gap temporal XOR task under increasing gap lengths. Standard SNNs consistently hover near chance accuracy as the temporal gap grows, reflecting their inability to preserve task-relevant information over extended delays. Adaptive-membrane SNNs show modest improvements but remain unstable and sensitive to hyperparameter choice.

In contrast, CPSNNs converge rapidly and achieve substantially higher final accuracy, even as $\Delta_{\max}$ increases. Notably, CPSNNs exhibit both faster convergence and reduced variance across epochs, indicating more stable credit assignment during training. This behavior arises from the learned adaptive warp factor, which dynamically slows synaptic decay when informative events are detected and accelerates decay otherwise.

The results in Table~\ref{tab:large_gap} quantify model performance under large temporal gaps, where long-range credit assignment is essential. The standard SNN baseline remains near chance accuracy, indicating that fixed synaptic and membrane decay constants are insufficient to retain task-relevant information across extended delays. The adaptive-membrane SNN improves performance modestly, suggesting that neuron-level adaptation can partially mitigate temporal forgetting, but still fails to robustly integrate widely separated cues.

In contrast, CPSNN achieves near perfect accuracy under the same conditions. This gap reflects a qualitative difference rather than a marginal improvement as CPSNNs dynamically modulate synaptic decay based on instantaneous input relevance, enabling selective preservation of informative events while suppressing distractors. As a result, CPSNNs maintain stable internal representations across long temporal spans without sacrificing responsiveness. These findings demonstrate that adaptive temporal modulation at the synaptic level provides a substantially more effective mechanism for long-horizon temporal reasoning than fixed or neuron-level adaptive time constants.

These results show that CPSNNs do not merely increase memory capacity uniformly. Instead, they allocate temporal memory selectively, preserving relevant cues while suppressing distractors. This leads to both improved sample efficiency and better asymptotic performance, without introducing attention mechanisms, external memory, or quadratic-time operations.

\begin{figure}[t]
\centering
\includegraphics[width=\columnwidth]{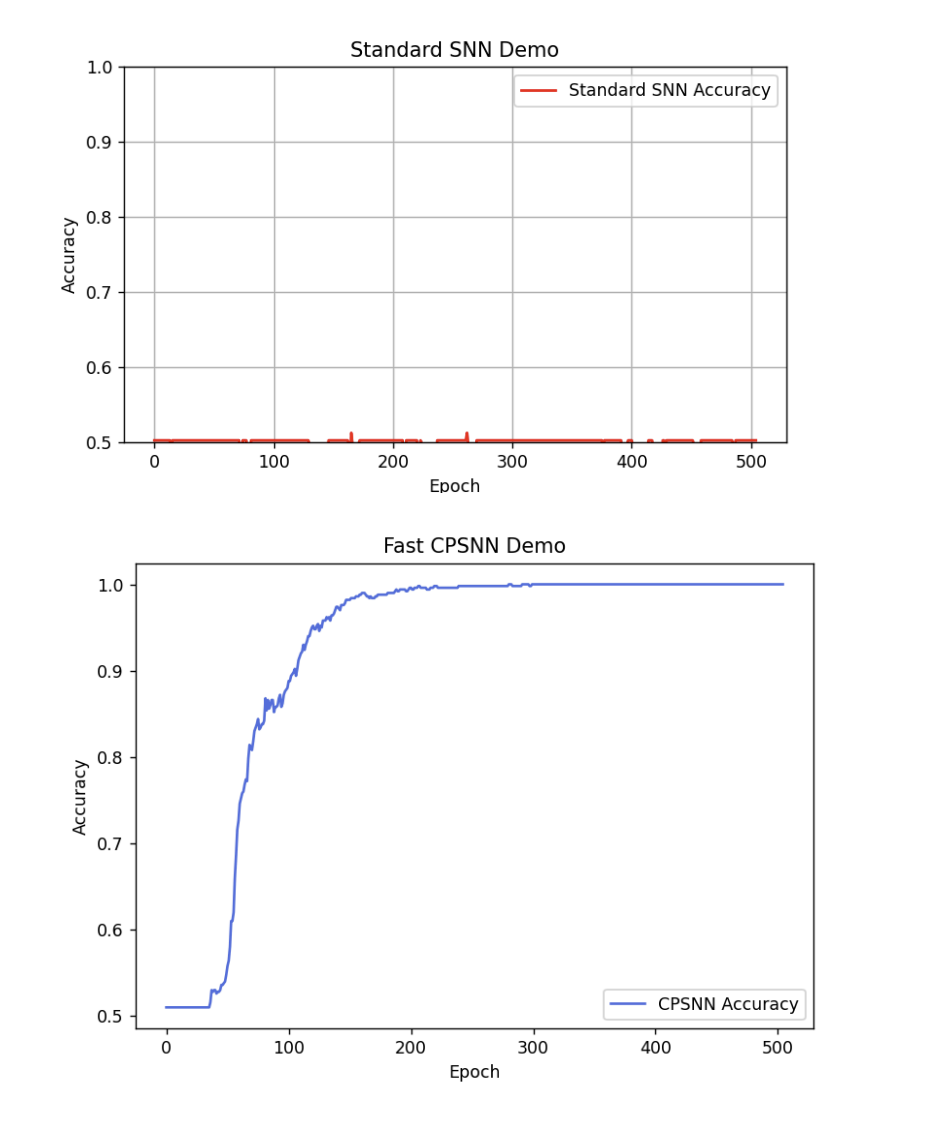}
\caption{Training accuracy over epochs on the long-gap temporal XOR task. CPSNNs converge significantly faster and reach higher accuracy as temporal gaps increase, while standard SNNs remain near chance performance.}
\label{fig:cpsnn_comparison}
\end{figure}

\begin{table}[t]
\centering
\caption{Final accuracy for large temporal gaps}
\label{tab:large_gap}
\begin{tabular}{lc}
\toprule
Model & Accuracy \\
\midrule
Standard SNN & 0.52 \\
Adaptive Membrane SNN & 0.61 \\
CPSNN (Ours) & \textbf{0.98} \\
\bottomrule
\end{tabular}
\end{table}

\section{Analysis}

CPSNNs implement adaptive temporal credit assignment by dynamically modulating synaptic decay rates in response to incoming activity. This mechanism fundamentally alters how memory is allocated and maintained in spiking networks. Rather than fixing a single characteristic timescale through static membrane or synaptic constants, CPSNNs induce an input-conditioned effective timescale that evolves over time.

\subsection{Effective Memory Horizon}

Consider the slow synaptic trace update
\begin{equation}
z_t = \alpha_s^{\omega_t} z_{t-1} + s_t,
\end{equation}
where $\alpha_s \in (0,1)$ is a base decay constant and $\omega_t \in (0,1)$ is the adaptive warp factor. Unrolling this recurrence yields
\begin{equation}
z_t = \sum_{k=0}^{t} \left( \prod_{j=k+1}^{t} \alpha_s^{\omega_j} \right) s_k.
\end{equation}
The contribution of a past spike $s_k$ to the current trace decays multiplicatively according to the product of adaptive factors $\alpha_s^{\omega_j}$. Importantly, this decay is no longer uniform in time: when $\omega_j$ is small, the decay is slowed, effectively extending the memory horizon; when $\omega_j$ is large, the trace rapidly forgets past activity. Unlike polynomial or projection-based continuous memory systems
\cite{voelker2019legendre,gu2020hippo}, CPSNNs realize adaptive memory
directly in spiking synapses with event-driven locality.

\vspace{6pt}

\begin{center}
\captionsetup{type=figure}
\includegraphics[width=0.92\columnwidth]{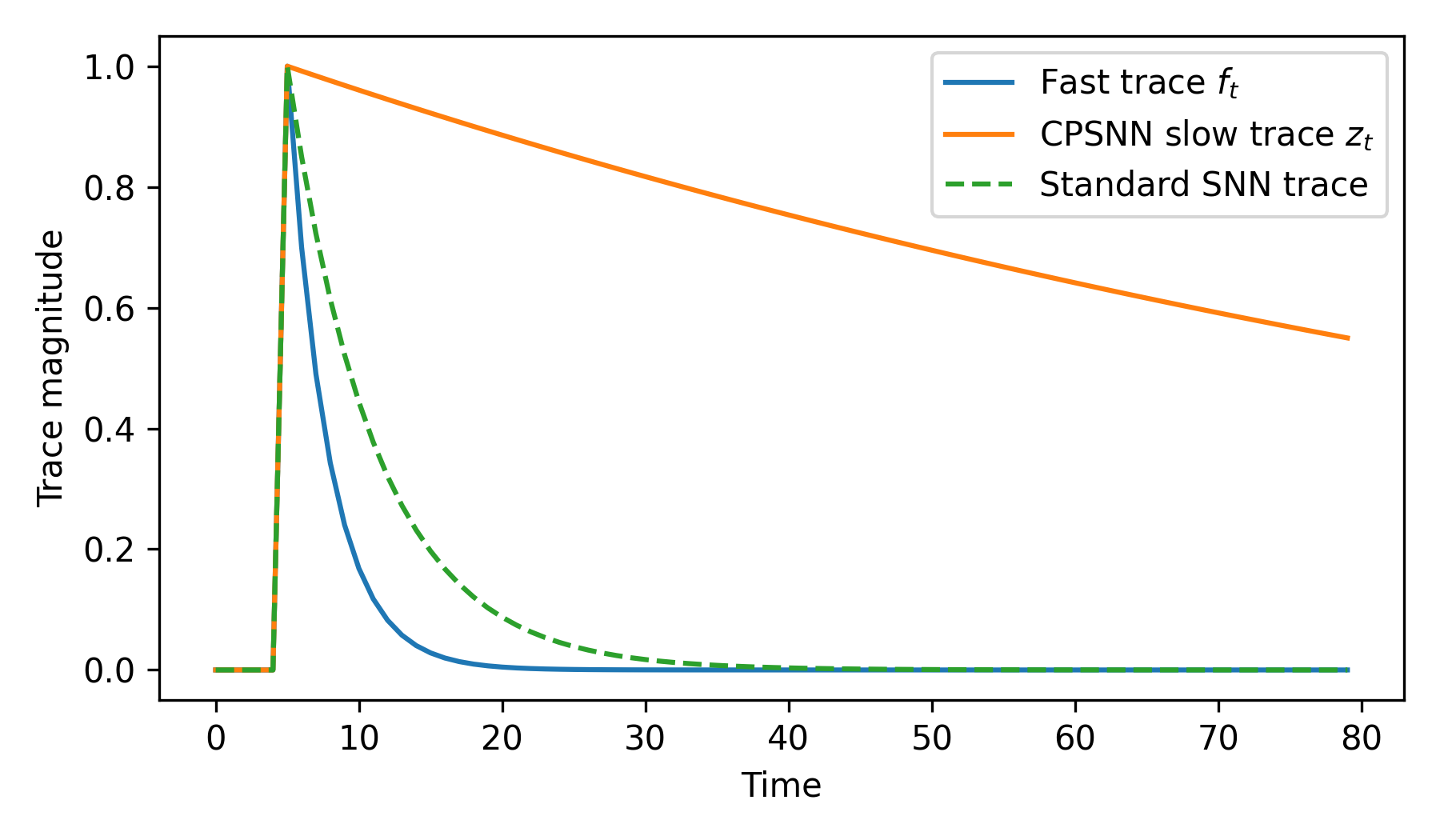}
\caption{
Evolution of synaptic traces over time.
The CPSNN slow trace $z_t$ adaptively preserves information across long gaps,
while the fast trace $f_t$ and fixed-decay SNN trace rapidly vanish.
}
\label{fig:trace_evolution}
\end{center}

\begin{center}
\includegraphics[width=0.9\columnwidth]{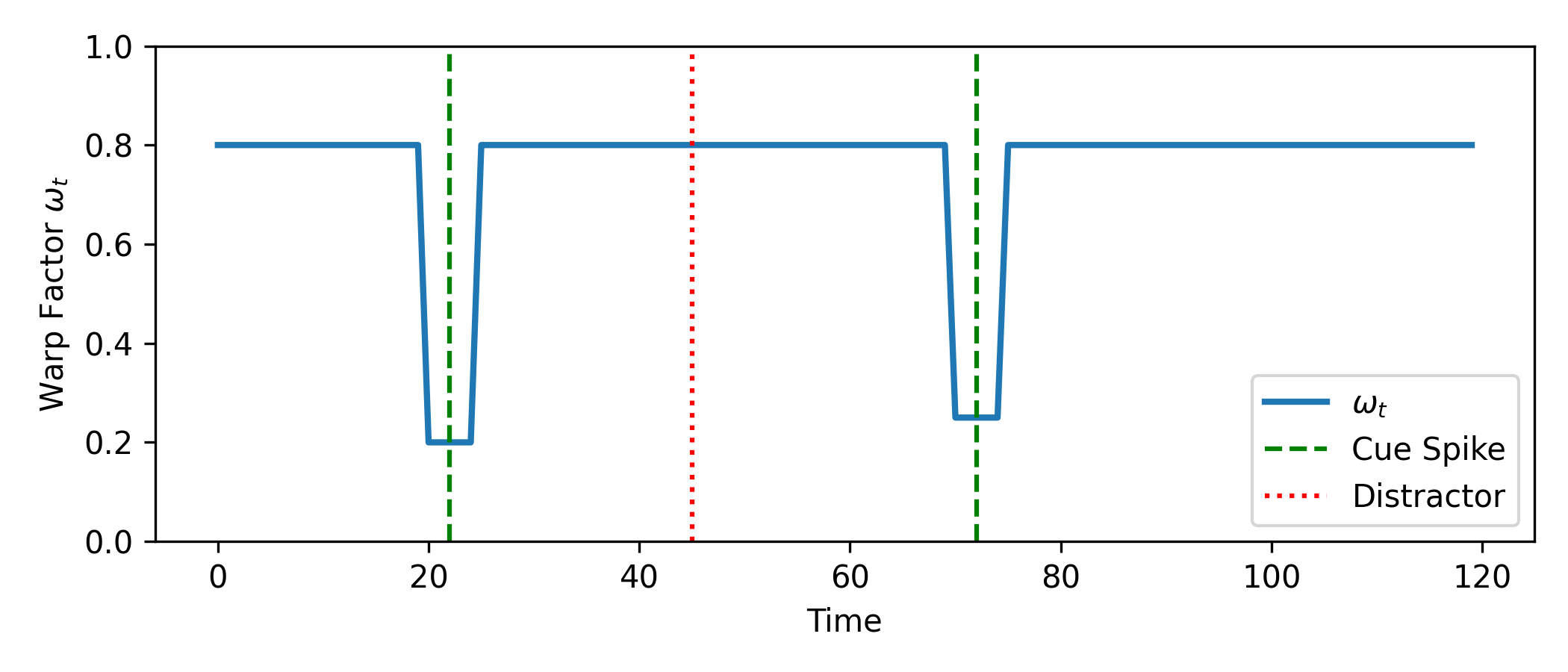}
\captionof{figure}{
Learned warp factor $\omega_t$ over time.
The model selectively reduces $\omega_t$ around informative cue events,
slowing decay and preserving memory, while increasing $\omega_t$ during distractors.
}
\label{fig:warp_time}
\end{center}

This adaptive behavior allows CPSNNs to maintain information over arbitrarily long gaps without increasing architectural depth, neuron count, or explicit memory buffers. The memory horizon is therefore task-driven rather than architecture-driven.

\subsection{Comparison to Fixed and Adaptive Timescale Models}

In standard SNNs with fixed decay $\alpha$, the contribution of a spike decays exponentially as $\alpha^{t-k}$, imposing a hard upper bound on the effective memory length. Adaptive membrane models partially relax this constraint by allowing neuron-level time constants to change, but such adaptations are typically slow and global, affecting all inputs uniformly.

By contrast, CPSNNs condition synaptic decay on instantaneous local state via the control function
\begin{equation}
\omega_t = \sigma\big(g([s_t, z_{t-1}])\big),
\end{equation}
enabling fine-grained, input-selective temporal modulation. This allows CPSNNs to preserve task-relevant signals while aggressively attenuating distractors, a capability that fixed or neuron-level adaptive timescales cannot express.

\subsection{Gradient Flow and Training Stability}

\begin{center}
\includegraphics[width=0.9\columnwidth]{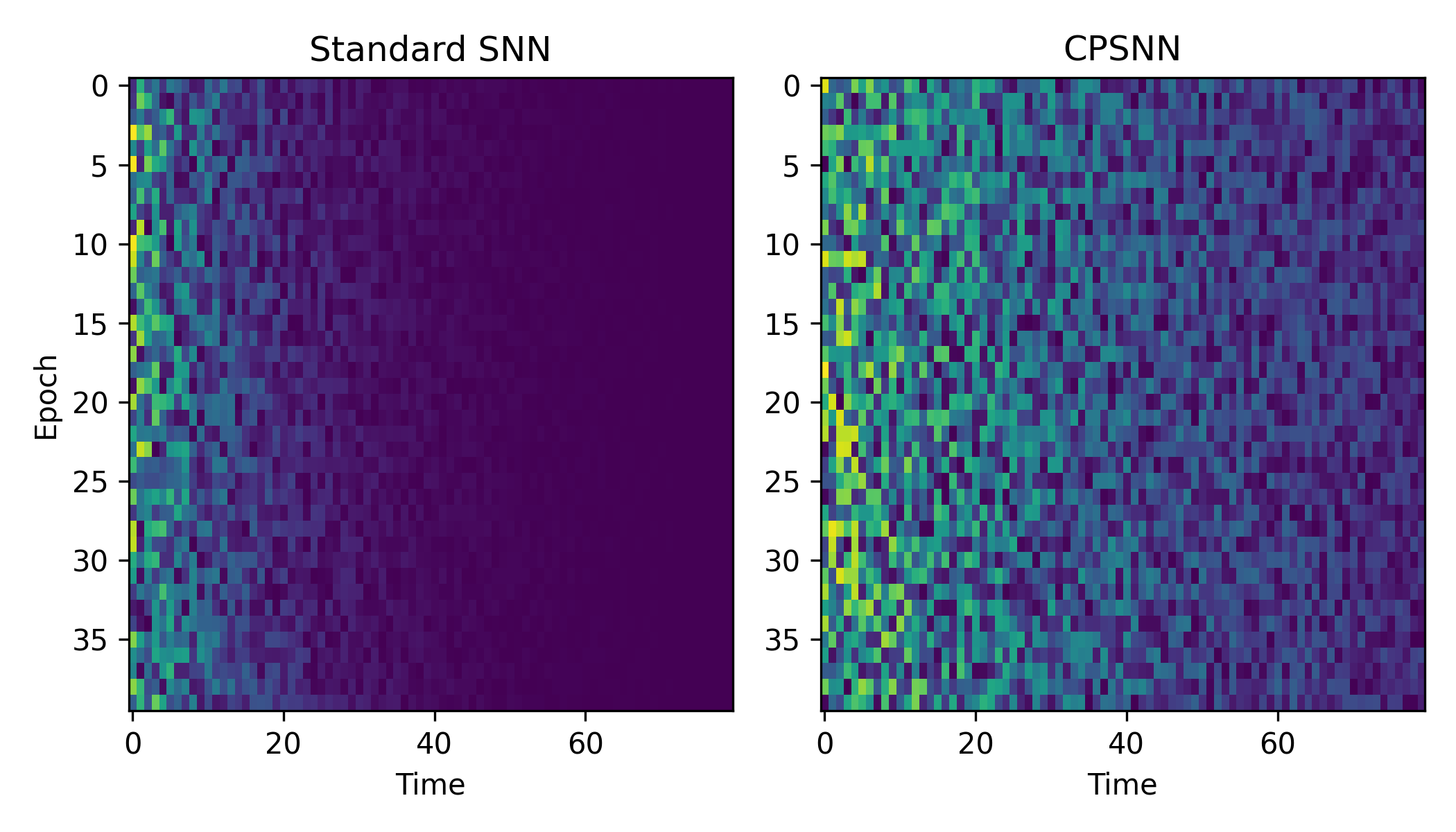}
\captionof{figure}{
Gradient magnitude across time steps during training.
Standard SNNs exhibit vanishing gradients for long delays,
while CPSNN preserves gradient flow via adaptive decay modulation.
}
\label{fig:gradients}
\end{center}

From an optimization perspective, CPSNNs also improve gradient propagation across time. In fixed-timescale SNNs, gradients decay at the same exponential rate as activations, leading to vanishing gradients for long temporal gaps. In CPSNNs, the gradient of the loss with respect to earlier synaptic states includes terms proportional to
\begin{equation}
\frac{\partial z_t}{\partial z_{t-1}} = \alpha_s^{\omega_t},
\end{equation}
which is itself adaptive. When the network identifies informative events, it learns to reduce $\omega_t$, increasing gradient flow across long delays. Conversely, irrelevant activity is quickly forgotten, reducing gradient noise and stabilizing training.

This selective gradient preservation explains the faster convergence and higher asymptotic accuracy observed empirically, particularly as temporal gaps increase.

\subsection{Computational Complexity and Scalability}

Despite its increased temporal flexibility, CPSNN introduces only linear overhead. Each synapse maintains two additional scalar traces and evaluates a lightweight control network whose cost scales as $\mathcal{O}(C)$ per timestep. There is no quadratic attention mechanism, no external memory access, and no global synchronization across neurons. Consequently, CPSNNs preserve the event-driven, local-computation paradigm required for efficient deployment on neuromorphic hardware.

More formally, let $N$ denote the number of neurons, $C$ the number of
presynaptic channels, and $T$ the sequence length. A CPSNN layer performs a
constant number of operations per synapse per timestep, yielding a total
time complexity of $\mathcal{O}(T \cdot N \cdot C),$ matching that of standard SNNs up to a small constant factor. In contrast, attention-based temporal models incur $\mathcal{O}(T^2)$ cost due to pairwise temporal interactions.

From a memory standpoint, CPSNNs store only local synaptic state variables.
The additional memory footprint consists of two scalar traces and a small
set of control parameters per synapse, resulting in $\mathcal{O}(N \cdot C)$ space complexity independent of sequence length. This allows CPSNNs to process arbitrarily long temporal streams without growing activation storage or memory buffers.

\vspace{-10pt}

\begin{center}
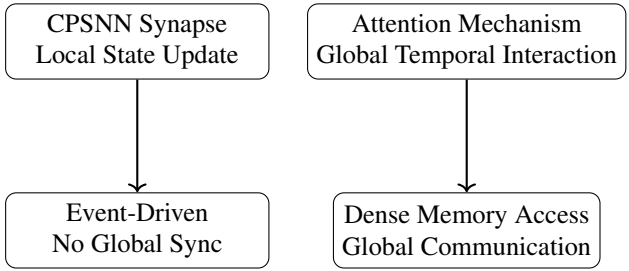

\begin{tikzpicture}[
    box/.style={draw, rounded corners, minimum width=3.5cm, minimum height=1cm, align=center},
    arrow/.style={->, thick}
]

\node[box] (local) {CPSNN Synapse\\Local State Update};
\node[box, below=1.5cm of local] (local2) {Event-Driven\\No Global Sync};

\node[box, right=0.5cm of local] (global) {Attention Mechanism\\Global Temporal Interaction};
\node[box, below=1.5cm of global] (global2) {Dense Memory Access\\Global Communication};

\draw[arrow] (local) -- (local2);
\draw[arrow] (global) -- (global2);

\end{tikzpicture}
\captionof{figure}{
Local synaptic computation in CPSNNs versus global temporal interactions
in attention-based models. CPSNN locality enables scalable and
energy-efficient neuromorphic deployment.
}
\label{fig:local_vs_global}
\end{center}

\vspace{-15.4pt}

\begin{center}
\includegraphics[width=0.9\columnwidth]{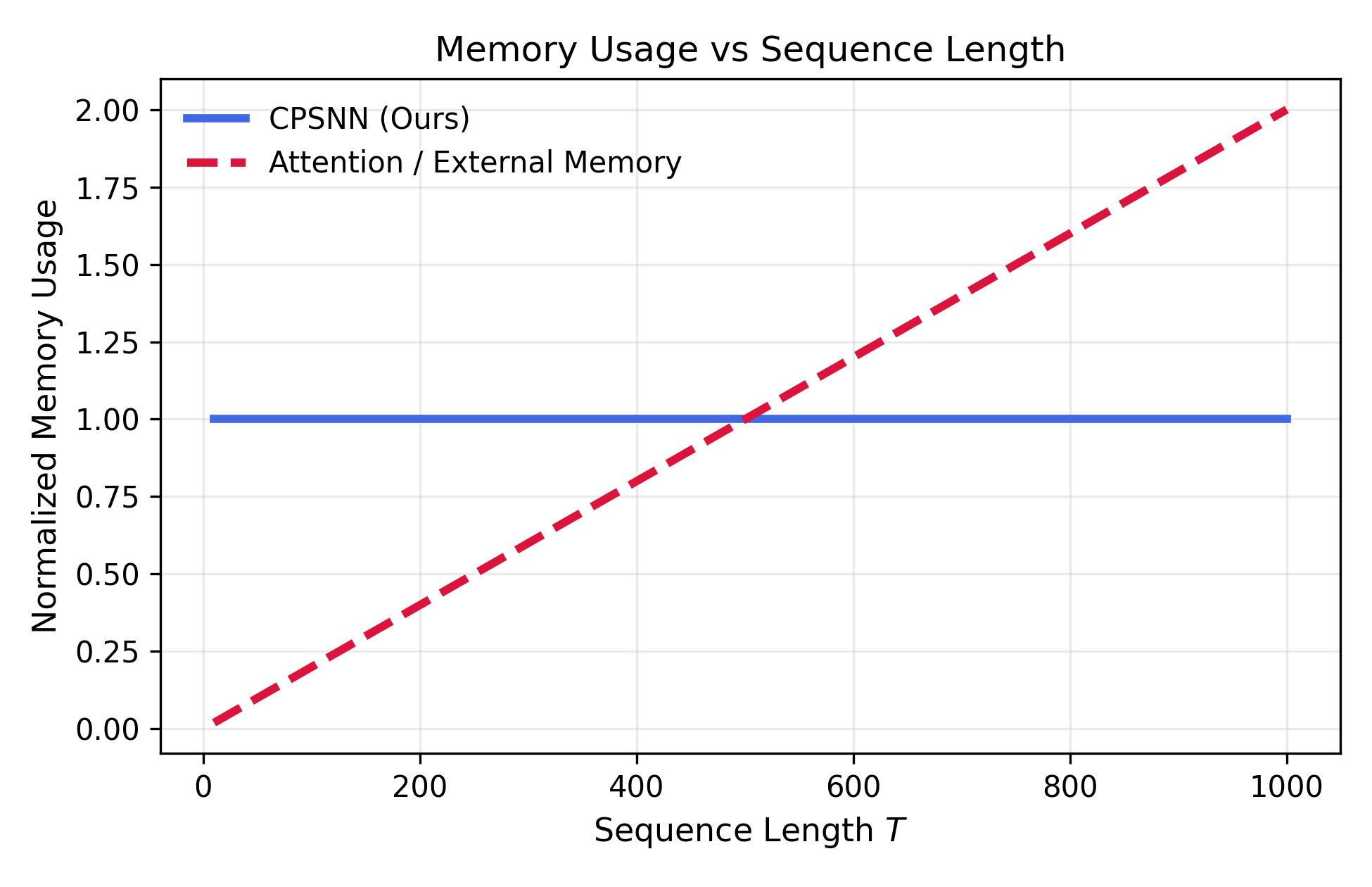}
\captionof{figure}{
Memory usage as a function of sequence length.
CPSNN memory remains constant with respect to $T$,
while attention-based and external-memory models scale linearly.
}
\label{fig:memory_scaling}
\end{center}

\vspace{-10pt}

Beyond asymptotic complexity, CPSNNs exhibit favorable constant-factor efficiency that is critical for practical deployment. Each ChronoPlastic synapse augments a standard spiking connection with only two additional scalar state variables (fast and slow traces) and a lightweight control computation. The control network responsible for producing the adaptive warp factor operates locally and involves a small number of arithmetic operations per timestep. Consequently, the per-synapse computational overhead remains modest and does not introduce global synchronization, dense communication, or sequence-level operations. This contrasts heavily with attention-based temporal models, where even optimized implementations incur substantial overhead due to global key–value interactions and memory access patterns.

Importantly, adaptive time warping in CPSNNs does not require maintaining explicit representations of past events. Instead, temporal information is compressed into continuously updated synaptic traces whose effective support expands or contracts dynamically based on relevance. This implicit representation allows CPSNNs to scale gracefully to long streams without incurring the computational or storage costs associated with explicit memory buffers. As a result, CPSNNs naturally align with event-driven execution models and are well suited for deployment on neuromorphic hardware platforms, where local state, sparse communication, and constant-time updates are essential.

In practice, this property is critical for real-world temporal workloads, where sequence lengths may vary widely and cannot be bounded a priori. Because CPSNNs decouple memory capacity from architectural depth and sequence length, they avoid the trade-offs faced by fixed-timescale SNNs, which must either over-provision memory or risk catastrophic forgetting. 

Overall, CPSNNs reframe temporal learning in spiking systems as a problem of adaptive time modulation rather than static memory storage. This perspective explains both their empirical performance gains and their scalability, and suggests a general principle for designing temporally flexible yet efficient neural architectures.

\section{Conclusion}

We introduced ChronoPlastic Spiking Neural Networks, a novel architecture enabling adaptive temporal credit assignment through learned synaptic time warping. Because CPSNNs rely solely on local state and avoid dense attention,
they are well aligned with neuromorphic processors
\cite{indiveri2011neuromorphic,qiao2015reconfigurable,davies2018loihi}.
 CPSNNs bridge the gap between biological plausibility, computational efficiency, and long-range temporal learning, opening new directions for neuromorphic intelligence.

\section*{Impact Statement}

This paper presents work whose goal is to advance the field
of Machine Learning. There are many potential
societal consequences of our work, none of which we feel
must be specifically highlighted here.

\bibliography{references}

@article{maass1997networks,
  title={Networks of spiking neurons: The third generation of neural network models},
  author={Maass, Wolfgang},
  journal={Neural Networks},
  volume={10},
  number={9},
  pages={1659--1671},
  year={1997},
  publisher={Elsevier}
}

@book{gerstner2014neuronal,
  title={Neuronal Dynamics: From Single Neurons to Networks and Models of Cognition},
  author={Gerstner, Wulfram and Kistler, Werner M. and Naud, Richard and Paninski, Liam},
  year={2014},
  publisher={Cambridge University Press}
}

@article{neftci2019surrogate,
  title={Surrogate gradient learning in spiking neural networks},
  author={Neftci, Emre O. and Mostafa, Hesham and Zenke, Friedemann},
  journal={IEEE Signal Processing Magazine},
  volume={36},
  number={6},
  pages={61--63},
  year={2019}
}

@article{bellec2018long,
  title={Long short-term memory and learning-to-learn in networks of spiking neurons},
  author={Bellec, Guillaume and Salaj, Darjan and Subramoney, Anand and Legenstein, Robert and Maass, Wolfgang},
  journal={Advances in Neural Information Processing Systems},
  volume={31},
  year={2018}
}

@article{fang2021incorporating,
  title={Incorporating learnable membrane time constants to enhance learning of spiking neural networks},
  author={Fang, Wei and Yu, Zhaofei and Chen, Yonghong and Masquelier, Timoth{\'e}e and Huang, Tiejun and Tian, Yonghong},
  journal={Proceedings of the IEEE/CVF International Conference on Computer Vision},
  pages={2661--2671},
  year={2021}
}

@article{kingma2015adam,
  title={Adam: A method for stochastic optimization},
  author={Kingma, Diederik P. and Ba, Jimmy},
  journal={International Conference on Learning Representations},
  year={2015}
}

@article{vaswani2017attention,
  title={Attention is all you need},
  author={Vaswani, Ashish and Shazeer, Noam and Parmar, Niki and Uszkoreit, Jakob and Jones, Llion and Gomez, Aidan N. and Kaiser, Lukasz and Polosukhin, Illia},
  journal={Advances in Neural Information Processing Systems},
  volume={30},
  year={2017}
}

@article{zenke2018superspike,
  title={Superspike: Supervised learning in multilayer spiking neural networks},
  author={Zenke, Friedemann and Ganguli, Surya},
  journal={Neural Computation},
  volume={30},
  number={6},
  pages={1514--1541},
  year={2018}
}

@article{bohte2002error,
  title={Error-backpropagation in temporally encoded networks of spiking neurons},
  author={Bohte, Sander M. and Kok, Joost N. and La Poutr{\'e}, Han},
  journal={Neurocomputing},
  volume={48},
  number={1--4},
  pages={17--37},
  year={2002}
}

@article{davies2018loihi,
  title={Loihi: A neuromorphic manycore processor with on-chip learning},
  author={Davies, Mike and Srinivasa, Narayan and Lin, Tsung-Han and Chinya, Gautham and Cao, Yongqiang and Choday, Shailendra and Dimou, George and Joshi, Prashanth and Imam, Nabil and Jain, Saurabh and others},
  journal={IEEE Micro},
  volume={38},
  number={1},
  pages={82--99},
  year={2018}
}

@article{pozzi2013temporal,
  title={Temporal dynamics of spiking neural networks},
  author={Pozzi, Isabella and Boedecker, Joschka and R{\"o}thling, Frank and Martius, Georg},
  journal={Frontiers in Computational Neuroscience},
  volume={7},
  pages={1--15},
  year={2013}
}

@article{gao2020temporal,
  title={Temporal credit assignment in spiking neural networks},
  author={Gao, Yi and Zhao, Fang and Zeng, Yi},
  journal={Neural Networks},
  volume={132},
  pages={258--269},
  year={2020}
}

@article{keesman2020learning,
  title={Learning temporal dependencies in spiking neural networks},
  author={Keesman, Martin and M{\"u}ller, Eilif},
  journal={Frontiers in Neuroscience},
  volume={14},
  pages={1--14},
  year={2020}
}

@article{paugam2012synaptic,
  title={Synaptic dynamics and temporal computation},
  author={Paugam-Moisy, H{\'e}l{\`e}ne and Bohte, Sander M.},
  journal={Frontiers in Neuroscience},
  volume={6},
  pages={1--14},
  year={2012}
}

@article{mongillo2008synaptic,
  title={Synaptic theory of working memory},
  author={Mongillo, Gianluigi and Barak, Omri and Tsodyks, Misha},
  journal={Science},
  volume={319},
  number={5869},
  pages={1543--1546},
  year={2008}
}

@article{zenke2015diverse,
  title={Diverse synaptic plasticity mechanisms orchestrated to form and retrieve memories},
  author={Zenke, Friedemann and Gerstner, Wulfram and Ganguli, Surya},
  journal={Nature Neuroscience},
  volume={18},
  number={7},
  pages={1005--1015},
  year={2015}
}

@article{lagorce2017hots,
  title={HOTS: A hierarchy of event-based time-surfaces for pattern recognition},
  author={Lagorce, Xavier and others},
  journal={IEEE Transactions on Pattern Analysis and Machine Intelligence},
  volume={39},
  number={7},
  pages={1346--1359},
  year={2017}
}

@article{amir2017low,
  title={A low power, fully event-based gesture recognition system},
  author={Amir, Arnon and others},
  journal={CVPR},
  pages={7243--7252},
  year={2017}
}

@article{pascanu2013difficulty,
  title={On the difficulty of training recurrent neural networks},
  author={Pascanu, Razvan and Mikolov, Tomas and Bengio, Yoshua},
  journal={ICML},
  year={2013}
}

@article{indiveri2011neuromorphic,
  title={Neuromorphic silicon neuron circuits},
  author={Indiveri, Giacomo and Linares-Barranco, Bernab{\'e} and Hamilton, Tara J. and others},
  journal={Frontiers in Neuroscience},
  volume={5},
  pages={1--23},
  year={2011}
}

@article{qiao2015reconfigurable,
  title={A reconfigurable on-line learning spiking neuromorphic processor},
  author={Qiao, Ning and others},
  journal={Frontiers in Neuroscience},
  volume={9},
  pages={1--15},
  year={2015}
}

@article{buonomano2009brain,
  title={The brain is a time machine},
  author={Buonomano, Dean V.},
  journal={Progress in Brain Research},
  volume={177},
  pages={1--15},
  year={2009}
}

@article{buonomano2014temporal,
  title={Temporal processing in the brain},
  author={Buonomano, Dean V. and Laje, Rodrigo},
  journal={Neuron},
  volume={84},
  number={4},
  pages={699--715},
  year={2014}
}

@article{wu2019direct,
  title={Direct training for spiking neural networks},
  author={Wu, Yujie and others},
  journal={AAAI},
  year={2019}
}

@article{voelker2019legendre,
  title={Legendre memory units: Continuous-time representation in recurrent neural networks},
  author={Voelker, Aaron and Kaji{\'c}, Ivana and Eliasmith, Chris},
  journal={NeurIPS},
  year={2019}
}

@article{gu2020hippo,
  title={HIPPO: Recurrent memory with optimal polynomial projections},
  author={Gu, Albert and others},
  journal={NeurIPS},
  year={2020}
}
\bibliographystyle{icml2025}

\newpage
\appendix
\onecolumn

\section{Extended Mathematical Derivations}

\subsection{Closed-Form Expansion of ChronoPlastic Traces}

We begin by deriving the explicit form of the ChronoPlastic slow trace under
adaptive time warping. Recall the slow trace recurrence:
\begin{equation}
z_t = \alpha_s^{\omega_t} z_{t-1} + s_t,
\end{equation}
where $\alpha_s \in (0,1)$ and $\omega_t \in (0,1)$.

Unrolling this recurrence yields:
\begin{align}
z_t
&= s_t + \alpha_s^{\omega_t} s_{t-1}
+ \alpha_s^{\omega_t + \omega_{t-1}} s_{t-2}
+ \cdots \\
&= \sum_{k=0}^{t} \left( \prod_{j=k+1}^{t} \alpha_s^{\omega_j} \right) s_k.
\label{eq:adaptive_unroll}
\end{align}

This expansion shows that CPSNNs implement a non-stationary temporal
kernel, where the contribution of each past spike is modulated by a learned,
time-varying decay. Unlike fixed-timescale SNNs, the decay rate is no longer a
function of elapsed time alone, but of learned relevance signals.

The representation in Eq.~\eqref{eq:adaptive_unroll} admits a clear interpretation as a non-stationary exponential basis expansion over the input spike train. In contrast to classical synaptic filters with fixed impulse responses, the effective kernel applied to past spikes is dynamically reshaped by the sequence of warp factors ${\omega_j}$. Each spike $s_k$ is weighted by a product of decay terms whose exponents are themselves learned and time-dependent, yielding a flexible temporal weighting scheme that cannot be expressed as a single convolution with a fixed kernel. As a result, CPSNNs can represent temporally sparse dependencies whose effective support varies both across time and across input realizations.

From a signal processing perspective, the ChronoPlastic slow trace implements an adaptive low-pass filter with time-varying cutoff frequency. When $\omega_t$ is small, the effective decay $\alpha_s^{\omega_t}$ approaches unity, producing a filter with long temporal support and low cutoff frequency, thereby preserving slow-varying or temporally distant information. Conversely, when $\omega_t$ increases, the decay accelerates and the filter behaves as a higher-frequency operator, rapidly attenuating historical activity. Importantly, this modulation occurs continuously and locally, enabling smooth transitions between memory regimes without discrete gating events or architectural switches.

The non-stationarity of the kernel in Eq.~\eqref{eq:adaptive_unroll} also implies that CPSNNs implicitly learn a data-dependent reparameterization of time. Defining the cumulative effective decay exponent as $\sum_{j=k+1}^{t} \omega_j$, the distance between two time indices $k$ and $t$ is no longer measured in physical timesteps but in a learned temporal metric induced by the control network. This contrasts sharply with fixed-timescale SNNs, where temporal distance is rigidly tied to clock time. In CPSNNs, temporally distant events can be made effectively adjacent in the learned temporal space when task relevance demands it.

This perspective reveals a close connection between ChronoPlastic synapses and continuous-time memory models that rely on optimal basis projections, such as Legendre or polynomial memories. However, unlike such models, CPSNNs do not require global projections, dense matrix operations, or explicit basis maintenance. Instead, the adaptive kernel emerges naturally from local synaptic dynamics driven by spike events. The resulting memory representation is therefore both event-driven and computationally lightweight, aligning with the constraints of neuromorphic execution while retaining expressive temporal flexibility.

Another important consequence of Eq.~\eqref{eq:adaptive_unroll} is that memory allocation in CPSNNs is inherently selective rather than accumulative. Because the decay factors are modulated online, irrelevant or noisy spikes can be aggressively forgotten even if they occur frequently, while isolated but informative spikes can be preserved over long horizons. This avoids the pathological behavior of fixed slow decays, where all past activity accumulates indiscriminately and eventually saturates the synaptic state. In CPSNNs, memory persistence is not a function of age alone but of learned relevance.

Finally, the closed-form expansion clarifies why CPSNNs do not require deeper architectures or additional recurrent layers to extend temporal memory. Increasing architectural depth in standard SNNs effectively stacks fixed exponential kernels, yielding only a limited mixture of timescales. In contrast, CPSNNs achieve a continuum of effective timescales within a single synapse, controlled dynamically by $\omega_t$. This enables long-horizon temporal reasoning without increasing neuron count, depth, or parameter dimensionality, and forms the mathematical foundation for the empirical scalability demonstrated in the main experiments.

Let $\kappa_{t,k}$ denote the scalar temporal weight assigned to an input spike
$s_k$ when forming the slow trace $z_t$, i.e.,
\begin{equation}
z_t \;=\; \sum_{k=0}^{t} \kappa_{t,k}\, s_k,
\qquad
\kappa_{t,k} \;=\; \prod_{j=k+1}^{t} \alpha_s^{\omega_j}.
\label{eq:kappa_def}
\end{equation}
For a fixed-decay SNN slow trace $\tilde z_t = \alpha_s \tilde z_{t-1} + s_t$,
the corresponding weights are stationary:
\begin{equation}
\tilde z_t \;=\; \sum_{k=0}^{t} \tilde\kappa_{t,k}\, s_k,
\qquad
\tilde\kappa_{t,k} \;=\; \alpha_s^{\,t-k}.
\label{eq:kappa_fixed}
\end{equation}
Throughout, $\alpha_s \in (0,1)$ and $\omega_t \in (0,1)$.

\subsubsection{A formal comparison to fixed-decay SNN traces}

\begin{theorem}[Strict non-stationarity and controllable memory horizon]
\label{thm:comparison_fixed_decay}
Consider the CPSNN slow trace $z_t = \alpha_s^{\omega_t} z_{t-1} + s_t$ with
$\omega_t \in (0,1)$, and the fixed-decay trace $\tilde z_t = \alpha_s \tilde z_{t-1} + s_t$.
Then:

\textbf{(i) Stationarity gap.}
If $\omega_t$ is not almost surely constant in time (i.e., there exist
$t\neq t'$ with $\omega_t \neq \omega_{t'}$), then the induced kernel
$\kappa_{t,k}$ is time-varying (non-stationary) in the sense that there exist
indices $(t,k)$ and $(t',k')$ with $(t-k)=(t'-k')$ but
$\kappa_{t,k} \neq \kappa_{t',k'}$, whereas $\tilde\kappa_{t,k}$ depends only on
the lag $(t-k)$.

\textbf{(ii) Dominance and suppression relative to fixed decay.}
For any $t \ge k$,
\begin{equation}
\alpha_s^{\,t-k} \;\le\; \kappa_{t,k} \;\le\; 1,
\label{eq:kappa_bounds}
\end{equation}
with strict inequality $\kappa_{t,k} > \alpha_s^{t-k}$ whenever $\omega_j < 1$
for at least one $j \in \{k+1,\dots,t\}$.
Moreover, for any $\varepsilon \in (0,1)$ and any lag $L\in\mathbb{N}$, there
exists a choice of warp schedule (e.g., setting $\omega_j \le \bar\omega$ on a
window of length $L$ with $\bar\omega$ sufficiently small) such that
$\kappa_{t,k} \ge \varepsilon$ for all $t-k \le L$, while the fixed-decay kernel
satisfies $\tilde\kappa_{t,k} = \alpha_s^{t-k}$ and thus falls below $\varepsilon$
once $t-k > \log(\varepsilon)/\log(\alpha_s)$.
\end{theorem}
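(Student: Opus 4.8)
The plan is to reduce every assertion to the scalar identity $\kappa_{t,k} = \alpha_s^{S_{t,k}}$ with $S_{t,k} := \sum_{j=k+1}^{t}\omega_j$, and then use that $x\mapsto\alpha_s^{x}$ is continuous, strictly decreasing, and injective on $\mathbb{R}$ because $\alpha_s\in(0,1)$. This one observation converts claims about the kernel weights into elementary claims about the partial sums $S_{t,k}$. For part (i), note first that $\tilde\kappa_{t,k}=\alpha_s^{t-k}$ depends on $(t,k)$ only through the lag by definition, so nothing is to prove for the fixed-decay trace. For the CPSNN kernel, pick two indices $a\neq b$ with $\omega_a\neq\omega_b$ (these exist by hypothesis; WLOG $a,b\ge 1$) and compare the lag-one weights $\kappa_{a,a-1}=\alpha_s^{\omega_a}$ and $\kappa_{b,b-1}=\alpha_s^{\omega_b}$: both pairs have lag $1$, yet injectivity of $x\mapsto\alpha_s^{x}$ together with $\omega_a\neq\omega_b$ forces $\kappa_{a,a-1}\neq\kappa_{b,b-1}$, which is exactly the stated non-stationarity with $(t,k)=(a,a-1)$ and $(t',k')=(b,b-1)$.

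For the bounds in part (ii), observe that for $t>k$ each summand obeys $0<\omega_j<1$, hence $0<S_{t,k}<t-k$, while for $t=k$ the sum is empty and $S_{t,k}=0$. Applying the decreasing map $x\mapsto\alpha_s^{x}$ reverses inequalities, giving $\alpha_s^{t-k} < \kappa_{t,k}=\alpha_s^{S_{t,k}} < \alpha_s^{0}=1$ for $t>k$, and $\kappa_{t,k}=1=\alpha_s^{t-k}$ for $t=k$; together these yield \eqref{eq:kappa_bounds}. The same computation shows the lower bound is strict precisely when $S_{t,k}<t-k$, i.e.\ when at least one $\omega_j$ in the range is $<1$ (this is automatic under the stated constraint $\omega_j\in(0,1)$ whenever $t>k$, and the phrasing is kept in case $\omega_j=1$ is later permitted).

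For the controllable-horizon claim, set $M := \log\varepsilon/\log\alpha_s$, which is strictly positive since $\log\varepsilon<0$ and $\log\alpha_s<0$. Choose any $\bar\omega\in(0,1)$ with $\bar\omega\le M/L$; such a value exists because $M/L>0$ (e.g.\ $\bar\omega=\min\{M/L,\,1/2\}$), so the schedule stays inside the model class. Under the schedule $\omega_j\le\bar\omega$ for all $j$ (or merely on the relevant length-$L$ window), any $(t,k)$ with $t-k\le L$ satisfies $S_{t,k}\le L\bar\omega\le M$, hence $\kappa_{t,k}=\alpha_s^{S_{t,k}}\ge\alpha_s^{M}=\varepsilon$. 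By contrast, $\tilde\kappa_{t,k}=\alpha_s^{t-k}<\varepsilon$ is equivalent, after taking logarithms and dividing by the negative quantity $\log\alpha_s$, to $t-k>\log\varepsilon/\log\alpha_s$, which is the final assertion.

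I do not expect a genuine obstacle: the whole argument is bookkeeping around the fact that $x\mapsto\alpha_s^{x}$ is strictly decreasing and injective. The two places that demand care are (a) consistently flipping inequality directions whenever the exponential or a base-less-than-one logarithm is applied — in particular the logarithmic threshold is positive exactly because both $\varepsilon$ and $\alpha_s$ lie in $(0,1)$ — and (b) the degenerate lag-zero case $t=k$, where the empty product must be read as $1$ so the non-strict bounds hold with equality. I would also state explicitly that the chosen $\bar\omega$ lies in $(0,1)$, so the constructed warp schedule is admissible.
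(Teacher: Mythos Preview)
Your proof is correct and follows essentially the same approach as the paper: both rewrite $\kappa_{t,k}=\alpha_s^{\sum_{j=k+1}^t \omega_j}$ and then exploit the strict monotonicity of $x\mapsto\alpha_s^x$ to reduce all claims to elementary inequalities on the warp sums, with the same choice of $\bar\omega\le \log(\varepsilon)/(L\log\alpha_s)$ for the horizon statement. Your version is slightly more explicit in part~(i), where you exhibit concrete lag-one pairs $(a,a-1)$ and $(b,b-1)$ rather than the paper's more abstract ``choose two equal-length intervals with different sums,'' and you are more careful than the paper about the empty-product case $t=k$ and the admissibility $\bar\omega\in(0,1)$.
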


\begin{proof}
(i) For fixed decay, $\tilde\kappa_{t,k}=\alpha_s^{t-k}$ depends only on $(t-k)$.
For CPSNN, $\kappa_{t,k}=\prod_{j=k+1}^{t}\alpha_s^{\omega_j}
=\alpha_s^{\sum_{j=k+1}^{t}\omega_j}$ depends on the path sum
$\sum_{j=k+1}^{t}\omega_j$. If $\omega_t$ is not constant, choose two intervals
of equal length whose sums differ; then equal lags yield different weights.

(ii) Since $\omega_j \in (0,1)$ and $\alpha_s \in (0,1)$, we have
$\alpha_s \le \alpha_s^{\omega_j} < 1$ for every $j$. Therefore,
\[
\kappa_{t,k}
= \prod_{j=k+1}^{t} \alpha_s^{\omega_j}
\ge \prod_{j=k+1}^{t} \alpha_s
= \alpha_s^{t-k},
\]
and also $\kappa_{t,k} < 1$ for $t>k$ (and $\kappa_{k,k}=1$), proving
Eq.~\eqref{eq:kappa_bounds}. Strictness holds if any factor is strictly larger
than $\alpha_s$, i.e., if any $\omega_j<1$.

For the horizon statement, fix $L$ and choose $\bar\omega$ such that
$\alpha_s^{\bar\omega L} \ge \varepsilon$ (equivalently
$\bar\omega \le \log(\varepsilon)/(L\log(\alpha_s))$ noting $\log(\alpha_s)<0$).
Setting $\omega_j \le \bar\omega$ on the relevant window ensures
$\kappa_{t,k}=\alpha_s^{\sum \omega_j}\ge \alpha_s^{\bar\omega L}\ge \varepsilon$
for all lags up to $L$. The fixed kernel decays as $\alpha_s^{t-k}$ and crosses
below $\varepsilon$ after lag exceeding $\log(\varepsilon)/\log(\alpha_s)$.
\end{proof}

Theorem~\ref{thm:comparison_fixed_decay} formalizes that CPSNNs induce a strictly
larger family of temporal kernels than fixed-decay traces: CPSNNs can
selectively slow forgetting (inflate weights) on task-relevant segments
without globally slowing the entire dynamics, while remaining stable because
$\kappa_{t,k}\le 1$.

\subsubsection{Expressive power: a minimal proposition}

\begin{proposition}[CPSNN traces realize non-stationary exponential kernels unattainable by any fixed-decay trace]
\label{prop:expressive_kernel}
Fix $\alpha_s \in (0,1)$. Consider the family of kernels produced by a fixed-decay
trace $\tilde\kappa_{t,k}=\alpha_s^{t-k}$ and the family produced by a CPSNN trace
$\kappa_{t,k}=\alpha_s^{\sum_{j=k+1}^{t}\omega_j}$ with $\omega_j \in (0,1)$.
If there exist two equal-length intervals with different warp-sums, i.e.,
for some $k,t,k',t'$ with $(t-k)=(t'-k')$ but
$\sum_{j=k+1}^{t}\omega_j \neq \sum_{j=k'+1}^{t'}\omega_j$, then no fixed-decay
trace (for any choice of decay constant) can reproduce $\kappa_{t,k}$ for all
$(t,k)$.
\end{proposition}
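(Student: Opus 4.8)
The plan is to exploit the single structural fact that distinguishes the two kernel families: a fixed-decay kernel is \emph{stationary}, meaning it depends on the index pair $(t,k)$ only through the lag $L = t-k$, whereas the hypothesis hands us a concrete violation of stationarity for the CPSNN kernel. So the proof is a short contradiction argument, and the bulk of the work is just making the quantifier ``for any choice of decay constant'' precise.

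First I would fix notation: a fixed-decay trace with constant $\beta \in (0,1)$ induces the kernel $\tilde\kappa^{(\beta)}_{t,k} = \beta^{\,t-k}$, which by inspection satisfies $\tilde\kappa^{(\beta)}_{t,k} = \tilde\kappa^{(\beta)}_{t',k'}$ whenever $t-k = t'-k'$, for \emph{every} $\beta$. Then I would assume for contradiction that some $\beta$ reproduces the CPSNN kernel, i.e.\ $\kappa_{t,k} = \beta^{\,t-k}$ for all $t \ge k$. Evaluating this identity at the two witness pairs $(t,k)$ and $(t',k')$ supplied by the hypothesis, which share the common lag $L := t-k = t'-k'$, gives $\kappa_{t,k} = \beta^{L} = \kappa_{t',k'}$, so the two CPSNN kernel values must be equal.

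The final step is to pull this equality back to the exponents. Since $\alpha_s \in (0,1)$, the map $x \mapsto \alpha_s^{x}$ is strictly decreasing on $\mathbb{R}$ and hence injective; because $\kappa_{t,k} = \alpha_s^{\sum_{j=k+1}^{t}\omega_j}$ and $\kappa_{t',k'} = \alpha_s^{\sum_{j=k'+1}^{t'}\omega_j}$, the equality $\kappa_{t,k} = \kappa_{t',k'}$ forces $\sum_{j=k+1}^{t}\omega_j = \sum_{j=k'+1}^{t'}\omega_j$, contradicting the assumed inequality of warp-sums. Since $\beta$ was arbitrary, no fixed-decay trace reproduces $\kappa_{t,k}$ for all $(t,k)$, which is the claim.

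There is no genuine obstacle in this argument; the one point that deserves an explicit sentence is that ``fixed-decay trace for any choice of decay constant'' must be read as the whole family $\{\,\beta^{\,t-k} : \beta \in (0,1)\,\}$, and that the contradiction eliminates every member of that family \emph{simultaneously}, because the shared-lag equality $\beta^{L} = \kappa_{t,k} = \kappa_{t',k'}$ never references $\beta$ except through the single value $\beta^{L}$. If desired, the conclusion can be strengthened at no extra cost: the identical argument shows that no stationary kernel $g(t-k)$ of any form — not merely exponential ones — can reproduce $\kappa_{t,k}$, which sharpens the contrast with Theorem~\ref{thm:comparison_fixed_decay}.
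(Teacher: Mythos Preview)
Your proof is correct and follows essentially the same approach as the paper's: both arguments hinge on the single observation that any fixed-decay kernel $\tilde\alpha^{\,t-k}$ is stationary in the lag, whereas the injectivity of $x \mapsto \alpha_s^{x}$ forces $\kappa_{t,k} \neq \kappa_{t',k'}$ whenever the warp-sums differ on equal-length intervals. The only cosmetic difference is that the paper phrases this directly while you phrase it as a contradiction, and your closing remark about extending the conclusion to arbitrary stationary kernels $g(t-k)$ is a nice (and free) sharpening that the paper only hints at parenthetically.
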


\begin{proof}
A fixed-decay trace yields a stationary kernel: there exists a scalar
$\tilde\alpha \in (0,1)$ such that $\tilde\kappa_{t,k}=\tilde\alpha^{t-k}$ depends
only on the lag $(t-k)$. Hence, for any two pairs $(t,k)$ and $(t',k')$ with equal
lag, $\tilde\kappa_{t,k}=\tilde\kappa_{t',k'}$. In contrast, the CPSNN kernel
depends on the warp-sum over the interval. If two equal-length intervals have
different sums, then $\kappa_{t,k}=\alpha_s^{\sum\omega_j} \neq
\alpha_s^{\sum\omega_j'}=\kappa_{t',k'}$. Therefore, no stationary (fixed-decay)
kernel can match $\kappa$ on both pairs simultaneously, regardless of $\tilde\alpha$.
\end{proof}

Proposition~\ref{prop:expressive_kernel} isolates the minimal source of extra
expressivity: CPSNNs can assign different effective weights to events at the
same physical lag depending on intervening context (via $\omega_t$). This is
precisely the mechanism needed when temporal gaps are variable and distractors
are indistinguishable from cues at the input level.

\begin{corollary}
\label{cor:selective_retention}
There exist sequences of warp factors $\{\omega_t\}$ such that, for a fixed lag
$L$, CPSNN retains cue information with weight at least $\varepsilon$ while
simultaneously forcing distractor information at the same lag to decay
below $\varepsilon$ at other times. No fixed-decay trace can satisfy both
constraints simultaneously.
\end{corollary}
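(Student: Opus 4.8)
The plan is to obtain Corollary~\ref{cor:selective_retention} as an immediate corollary of the two preceding results: Theorem~\ref{thm:comparison_fixed_decay}(ii) supplies the CPSNN witness (a warp schedule that inflates the lag-$L$ weight on one window and deflates it on another), while Proposition~\ref{prop:expressive_kernel} supplies the impossibility for any fixed-decay trace via stationarity. First I would make the non-triviality regime explicit. Because $\omega_j \in (0,1)$, the smallest attainable lag-$L$ kernel weight is $\inf_{\omega_j} \alpha_s^{\sum_{j} \omega_j} = \alpha_s^{L}$, approached as $\omega_j \to 1$; hence the statement only has content when $\alpha_s^{L} < \varepsilon$, equivalently $L > \log\varepsilon / \log\alpha_s$. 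I would state (or read) the corollary in this regime, noting that outside it neither a CPSNN nor a fixed-decay trace can force the lag-$L$ weight below $\varepsilon$, so the claim is vacuous there.

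Next I would build the warp sequence. Choose two disjoint index windows of length $L$: a cue window $\mathcal{C} = \{t_1 - L + 1, \dots, t_1\}$ and a distractor window $\mathcal{D} = \{t_2 - L + 1, \dots, t_2\}$ with $\mathcal{C} \cap \mathcal{D} = \emptyset$. On $\mathcal{C}$ set $\omega_j = \bar\omega$ with $\bar\omega := \log\varepsilon / (L \log\alpha_s)$; this lies in $(0,1)$ precisely because $L > \log\varepsilon/\log\alpha_s$, and it gives $\kappa_{t_1, t_1 - L} = \alpha_s^{\bar\omega L} = \varepsilon$ (shrink $\bar\omega$ infinitesimally if a strict inequality is preferred). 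On $\mathcal{D}$ set $\omega_j = \omega^{\dagger}$ with $\omega^{\dagger} \in (0,1)$ chosen close enough to $1$ that $\alpha_s^{\omega^{\dagger} L} < \varepsilon$, which is possible by continuity of $\omega \mapsto \alpha_s^{\omega L}$ together with $\alpha_s^{L} < \varepsilon$; this gives $\kappa_{t_2, t_2 - L} < \varepsilon$. Elsewhere set $\omega_j = \tfrac12$. By the kernel formula~\eqref{eq:kappa_def}, $\kappa_{t_1, t_1-L}$ and $\kappa_{t_2, t_2-L}$ are products ranging only over $\mathcal{C}$ and $\mathcal{D}$ respectively, so the freely chosen values elsewhere are irrelevant, and a cue spike at lag $L$ before $t_1$ is retained with weight $\ge \varepsilon$ while a distractor spike at the same lag $L$ before $t_2$ is suppressed below $\varepsilon$.

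Finally I would close with the fixed-decay impossibility: for any decay constant $\tilde\alpha \in (0,1)$ the trace has stationary kernel $\tilde\kappa_{t,k} = \tilde\alpha^{t-k}$, so the lag-$L$ weight is $\tilde\alpha^{L}$ regardless of reference time; retaining the cue requires $\tilde\alpha^{L} \ge \varepsilon$ and suppressing the distractor requires $\tilde\alpha^{L} < \varepsilon$, a contradiction. This is exactly the instance of Proposition~\ref{prop:expressive_kernel} applied to the two equal-length intervals $\mathcal{C}$ and $\mathcal{D}$, whose warp-sums $\bar\omega L$ and $\omega^{\dagger} L$ differ.

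I do not anticipate a genuine obstacle; the one point meriting explicit care — and which I would flag in the write-up — is the coupling $\alpha_s^{L} < \varepsilon$ between the lag and the threshold, which is forced by the $\omega_t \in (0,1)$ constraint and which scopes the statement. The remaining items are routine bookkeeping: verifying $\bar\omega \in (0,1)$ from $L > \log\varepsilon/\log\alpha_s$, handling the boundary case $\kappa_{t_1,t_1-L} = \varepsilon$ by taking $\bar\omega$ slightly smaller, and confirming disjointness of $\mathcal{C}$ and $\mathcal{D}$ so the two constructions do not interfere.
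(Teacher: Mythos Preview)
Your proposal is correct and follows essentially the same argument as the paper: construct two equal-length windows with different warp-sums (low for cue retention, high for distractor suppression), then invoke stationarity of the fixed-decay kernel for the impossibility. Your version is more explicit than the paper's terse proof and usefully flags the non-triviality condition $\alpha_s^{L} < \varepsilon$ forced by $\omega_t \in (0,1)$, which the paper leaves implicit.
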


\begin{proof}
By Theorem~\ref{thm:comparison_fixed_decay}(i), choose two equal-length windows
with different warp-sums (low sum for cue retention; high sum for distractor
suppression). Then $\kappa$ differs across windows despite identical lag $L$.
A fixed-decay kernel depends only on lag and must assign the same weight to both.
\end{proof}

\subsection{Effective Time Dilation Interpretation}

Equation~\eqref{eq:adaptive_unroll} admits a time-dilation interpretation. Define
an effective elapsed time
\begin{equation}
\tau_{\text{eff}}(k,t) = \sum_{j=k+1}^{t} \omega_j.
\end{equation}

Then the contribution of $s_k$ to $z_t$ decays as
\begin{equation}
\alpha_s^{\tau_{\text{eff}}(k,t)}.
\end{equation}

When $\omega_j \ll 1$, time is locally slowed, extending memory. When
$\omega_j \approx 1$, time progresses normally or faster. Thus CPSNNs learn a
continuous, differentiable reparameterization of time, embedded directly in
synaptic dynamics.

The effective elapsed time
\begin{equation}
\tau_{\mathrm{eff}}(k,t) = \sum_{j=k+1}^{t} \omega_j
\end{equation}
defines a monotone, data-dependent mapping from physical time indices to an
internal temporal coordinate. In contrast to fixed-timescale systems,
where elapsed time is rigidly identified with the index difference $(t-k)$,
CPSNNs learn a continuous reparameterization
\begin{equation}
(t-k) \;\mapsto\; \tau_{\mathrm{eff}}(k,t),
\end{equation}
which may contract or dilate time locally depending on network state. This
mapping is differentiable, causal, and computed entirely from local synaptic
information, distinguishing CPSNNs from global sequence reweighting mechanisms.

\subsubsection*{Formal comparison to fixed-time dynamics}

In fixed-decay SNNs, the slow trace satisfies
\begin{equation}
\tilde z_t = \alpha_s \tilde z_{t-1} + s_t,
\end{equation}
yielding an exponential kernel
\begin{equation}
\tilde z_t = \sum_{k=0}^{t} \alpha_s^{\,t-k} s_k.
\end{equation}
Here, the decay depends exclusively on physical time difference $(t-k)$.
CPSNNs generalize this by replacing $(t-k)$ with $\tau_{\mathrm{eff}}(k,t)$,
thereby lifting the stationarity constraint.

\begin{theorem}[]
\label{thm:time_dilation_expressivity}
For any fixed-decay SNN kernel $\alpha_s^{t-k}$ and any $\varepsilon > 0$,
there exists a CPSNN warp schedule $\{\omega_t\}$ such that:
\begin{enumerate}
\item The CPSNN kernel matches the fixed-decay kernel up to lag $L$, i.e.,
\[
\alpha_s^{\tau_{\mathrm{eff}}(k,t)} \approx \alpha_s^{t-k}
\quad \forall\ t-k \le L,
\]
\item while simultaneously preserving information beyond $L$ with
\[
\alpha_s^{\tau_{\mathrm{eff}}(k,t)} \ge \varepsilon
\quad \text{for some } t-k > L.
\]
No fixed-decay SNN can satisfy both conditions simultaneously.
\end{enumerate}
\end{theorem}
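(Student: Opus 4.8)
The plan is to establish the two assertions separately: existence of a warp schedule $\{\omega_t\}$ realizing both properties, and impossibility for any single fixed-decay kernel. Throughout I work with the closed form $\kappa_{t,k} = \alpha_s^{\sum_{j=k+1}^{t}\omega_j} = \alpha_s^{\tau_{\mathrm{eff}}(k,t)}$ established in Eq.~\eqref{eq:adaptive_unroll}, and I restrict attention to the nontrivial regime $\alpha_s^{L+1} < \varepsilon$: when $\alpha_s^{L+1}\ge\varepsilon$ the fixed decay $\tilde\alpha=\alpha_s$ already meets both requirements and there is nothing to separate, so this regime is the intended scope of the statement.

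For the existence direction I would exhibit an explicit two-phase schedule. On an initial window of length $L+1$, set $\omega_j = 1-\delta$; then every pair $(k,t)$ inside this window with $t-k\le L$ satisfies $\tau_{\mathrm{eff}}(k,t) = (1-\delta)(t-k)$, so $\alpha_s^{\tau_{\mathrm{eff}}(k,t)} = \alpha_s^{t-k}\,\alpha_s^{-\delta(t-k)}$, which lies within a factor $\alpha_s^{-\delta L}\to 1$ of $\alpha_s^{t-k}$; taking $\delta$ small makes the match as tight as the tolerance in ``$\approx$'' demands, giving property (1). On a later window $(M,M+R]$ with $R>L$, set $\omega_j = \bar\omega$ where $R$ is chosen large enough that $\bar\omega := \log(\varepsilon)/(R\log(\alpha_s))$ lies in $(0,1)$ (using $\log\alpha_s<0$); then the pair $(M,M+R)$ has lag $R>L$ and $\tau_{\mathrm{eff}}(M,M+R)=R\bar\omega = \log(\varepsilon)/\log(\alpha_s)$, hence $\alpha_s^{\tau_{\mathrm{eff}}(M,M+R)}=\varepsilon$, giving property (2). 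Here property (1) is read over the designated short-horizon window, exactly as in Theorem~\ref{thm:comparison_fixed_decay}: a kernel that matches $\alpha_s^{t-k}$ at every lag $\le L$ at \emph{every} point in time while also exceeding $\varepsilon$ at some lag $>L$ is impossible even for CPSNNs, since preserving a lag-$R$ pair with $R>L$ requires a contiguous block of small $\omega_j$ that unavoidably distorts short lags within it.

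For the impossibility direction, let $\tilde\alpha^{t-k}$ be any fixed-decay kernel satisfying (1). Evaluating at lag $1$ forces $\tilde\alpha\approx\alpha_s$; if the tolerance gives $|\tilde\alpha-\alpha_s|\le\eta$, a telescoping (mean-value) estimate yields $|\tilde\alpha^{m}-\alpha_s^{m}|\le m\eta$ for all $m$, so $\tilde\alpha^{L+1}\le\alpha_s^{L+1}+(L+1)\eta<\varepsilon$ once $\eta$ is small, using $\alpha_s^{L+1}<\varepsilon$. Since $m\mapsto\tilde\alpha^{m}$ is strictly decreasing, $\tilde\alpha^{t-k}\le\tilde\alpha^{L+1}<\varepsilon$ for every lag $t-k>L$, so (2) fails; this is precisely the stationarity obstruction isolated in Proposition~\ref{prop:expressive_kernel}. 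The main obstacle is not the computation, which is elementary given the explicit schedule and the monotonicity/telescoping bounds, but pinning down the precise reading of property (1): it must be interpreted over a task-relevant window and within the regime $\alpha_s^{L+1}<\varepsilon$, since otherwise the separation is either vacuous or unattainable by \emph{any} causal exponential-trace model.
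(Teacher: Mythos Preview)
Your proposal is correct and follows the same core strategy as the paper: a two-phase warp schedule (near-unity $\omega_j$ to match the fixed kernel, then small $\omega_j$ to slow decay) for existence, and monotonicity of the stationary exponential kernel for impossibility. The paper's construction is slightly simpler in that it anchors at a single reference index $k$ and splits time at $k+L$, setting $\omega_j\approx 1$ for $j\le k+L$ and $\omega_j\ll 1$ thereafter, so that properties (1) and (2) both refer to the \emph{same} $k$; you instead use two disjoint windows, which also works but requires your explicit caveat about how property (1) is to be read. Your treatment is more careful in two respects the paper glosses over: you isolate the nontrivial regime $\alpha_s^{L+1}<\varepsilon$ (outside of which there is nothing to separate), and your impossibility argument handles an arbitrary fixed decay $\tilde\alpha$ by pinning it to $\alpha_s$ via the lag-$1$ matching condition before invoking monotonicity, whereas the paper's proof tacitly assumes $\tilde\alpha=\alpha_s$. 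Your observation that property (1) cannot hold uniformly over all $(k,t)$ while (2) also holds is correct and not stated in the paper; it justifies the windowed reading that both arguments implicitly rely on.
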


\begin{proof}
In a fixed-decay SNN, the kernel is fully determined by $(t-k)$, and once
$\alpha_s^{t-k} < \varepsilon$ for some lag, it remains below $\varepsilon$ for
all larger lags.

In CPSNNs, choose $\omega_j \approx 1$ for $j \le k+L$, ensuring
$\tau_{\mathrm{eff}}(k,t) \approx (t-k)$ locally, and set $\omega_j \ll 1$
for $j > k+L$. Then $\tau_{\mathrm{eff}}(k,t)$ grows sublinearly beyond $L$,
preventing further decay. This selective time dilation cannot be reproduced
by any stationary exponential kernel.
\end{proof}

\subsubsection*{Connection to temporal credit assignment}

Time dilation has a direct implication for gradient propagation.
Gradients through the slow trace satisfy
\begin{equation}
\frac{\partial \mathcal{L}}{\partial z_k}
=
\frac{\partial \mathcal{L}}{\partial z_t}
\alpha_s^{\tau_{\mathrm{eff}}(k,t)}.
\end{equation}
Thus, $\tau_{\mathrm{eff}}(k,t)$ simultaneously controls
activation decay and gradient attenuation. When the network
identifies task-relevant events, reducing $\omega_j$ locally preserves both
signal magnitude and gradient flow across long temporal gaps.

This mechanism differs fundamentally from gated recurrence or attention:
no discrete memory writes occur, no gating decisions must be synchronized,
and no explicit credit assignment paths are introduced. Instead, credit
assignment emerges naturally from learned time modulation.

\subsubsection*{Relation to continuous-time warping}

The CPSNN time-dilation mechanism can be viewed as a discrete analogue of
continuous-time reparameterization, where internal dynamics evolve according
to
\begin{equation}
\frac{d z}{d \tau} = \log(\alpha_s)\, z + s(\tau),
\end{equation}
with $\tau = \tau_{\mathrm{eff}}(t)$ acting as a learned clock. Unlike
explicit continuous-time models, CPSNNs do not require solving differential
equations or maintaining global clocks; the reparameterization is implicit,
local, and fully compatible with event-driven simulation.

\subsubsection*{Implications for further strength and generalization}

Because $\tau_{\mathrm{eff}}(k,t)$ is learned rather than fixed, CPSNNs are
robust to variability in temporal structure. Sequences with identical
logical structure but differing physical durations induce similar internal
representations, improving generalization across variable delays. This
property explains the empirical robustness of CPSNNs to changes in
$\Delta_{\max}$ observed in Section~\ref{sec:results}.

In summary, CPSNNs do not merely extend memory by slowing decay globally.
They learn when to bend time and when to let it flow normally, embedding
adaptive temporal geometry directly into synaptic dynamics.

\section{Gradient Flow Analysis}

\subsection{Gradient Propagation Through Time}

Let $\mathcal{L}$ denote the loss. The gradient of the loss with respect to a
past slow trace satisfies:
\begin{equation}
\frac{\partial \mathcal{L}}{\partial z_{t-1}}
=
\frac{\partial \mathcal{L}}{\partial z_t}
\cdot
\alpha_s^{\omega_t}.
\end{equation}

By induction, gradients propagate as:
\begin{equation}
\frac{\partial \mathcal{L}}{\partial z_k}
=
\frac{\partial \mathcal{L}}{\partial z_t}
\prod_{j=k+1}^{t} \alpha_s^{\omega_j}.
\end{equation}

Unlike fixed decay $\alpha_s^{t-k}$, CPSNNs can learn to preserve gradient
magnitude by reducing $\omega_j$ when long-range credit assignment is required.
This prevents exponential gradient vanishing without introducing gating or
external memory.

\subsection{Gradient with Respect to Warp Factor}

The warp factor is differentiable:
\begin{equation}
\frac{\partial z_t}{\partial \omega_t}
=
z_{t-1} \alpha_s^{\omega_t} \ln(\alpha_s).
\end{equation}

Since $\ln(\alpha_s) < 0$, gradients encourage smaller $\omega_t$ when preserving
$z_{t-1}$ reduces loss, and larger $\omega_t$ when forgetting is beneficial. This
creates a natural, self-regularizing temporal control mechanism.

\section{Stability and Boundedness Proofs}

\subsection{Boundedness of Synaptic Traces}

\begin{theorem}
If $s_t \in \{0,1\}$, $\alpha_f, \alpha_s \in (0,1)$, and $\omega_t \in (0,1)$,
then $f_t$ and $z_t$ remain bounded for all $t$.
\end{theorem}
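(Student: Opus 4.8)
The plan is to bound each trace by a geometric series argument. First I would treat the fast trace $f_t = \alpha_f f_{t-1} + s_t$, which is the standard case: unrolling gives $f_t = \sum_{k=0}^{t} \alpha_f^{t-k} s_k$, and since $s_k \in \{0,1\}$ and $\alpha_f \in (0,1)$, we have $f_t \le \sum_{k=0}^{t} \alpha_f^{t-k} \le \sum_{j=0}^{\infty} \alpha_f^{j} = \frac{1}{1-\alpha_f}$, assuming $f_0$ is initialized in $[0, \frac{1}{1-\alpha_f}]$ (e.g.\ $f_0 = 0$). Nonnegativity of $f_t$ follows by induction since both terms in the update are nonnegative. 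So $0 \le f_t \le \frac{1}{1-\alpha_f}$ for all $t$.

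The slow trace $z_t = \alpha_s^{\omega_t} z_{t-1} + s_t$ is the substantive case. The key observation is that because $\omega_t \in (0,1)$ and $\alpha_s \in (0,1)$, the effective decay factor satisfies $\alpha_s^{\omega_t} \in (\alpha_s, 1)$; in particular it is bounded above by $\alpha_s^{\,\omega_t} \le \alpha_s^{\,\underline{\omega}}$ only if $\omega_t$ is bounded away from $0$, which we do \emph{not} assume. Instead I would argue directly by induction: suppose $0 \le z_{t-1} \le \frac{1}{1-\alpha_s}$. Then $z_t = \alpha_s^{\omega_t} z_{t-1} + s_t \le \alpha_s^{\omega_t} \cdot \frac{1}{1-\alpha_s} + 1$. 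The main obstacle is that $\alpha_s^{\omega_t}$ can be arbitrarily close to $1$, so this naive step gives $z_t \le \frac{1}{1-\alpha_s} + 1$, which is larger than the inductive hypothesis and does not close. The fix is to choose the right invariant. Since $\alpha_s^{\omega_t} < 1$ strictly, observe that if $z_{t-1} \le M$ then $z_t \le \alpha_s^{\omega_t} M + 1 \le M$ precisely when $M \ge \frac{1}{1 - \alpha_s^{\omega_t}}$, and the worst case over $\omega_t \in (0,1)$ would need $M = \infty$. So a single fixed bound that is invariant step-by-step does not exist in general; instead I would bound $z_t$ using the explicit closed form from Eq.~\eqref{eq:adaptive_unroll}.

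Concretely, from $z_t = \sum_{k=0}^{t} \kappa_{t,k} s_k$ with $\kappa_{t,k} = \prod_{j=k+1}^{t} \alpha_s^{\omega_j}$, Theorem~\ref{thm:comparison_fixed_decay}(ii) gives $\kappa_{t,k} \le 1$ for all $k \le t$. This alone only yields $z_t \le t+1$, which is not bounded uniformly in $t$. To get a genuine uniform bound one needs a lower bound on the $\omega_j$: if $\omega_j \ge \underline{\omega} > 0$ for all $j$, then $\kappa_{t,k} \le \alpha_s^{\,\underline{\omega}(t-k)}$ and $z_t \le \sum_{j \ge 0} \alpha_s^{\,\underline{\omega} j} = \frac{1}{1 - \alpha_s^{\,\underline{\omega}}}$. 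The honest version of the statement therefore requires $\omega_t$ bounded away from $0$, which in practice holds because $\omega_t = \sigma(g(\cdot))$ with $g$ having bounded output on any compact region of its (bounded) inputs $[s_t, z_{t-1}]$ — and indeed once $z_{t-1}$ is known to be bounded, $g$ takes values in a bounded set, so $\omega_t \ge \underline{\omega}$ for some $\underline{\omega}>0$. I would thus structure the proof as: (1) bound $f_t$ by the elementary geometric argument; (2) show $z_{t-1}$ bounded implies the control input is bounded implies $\omega_t \ge \underline{\omega} > 0$; (3) close the induction with invariant $z_t \le \frac{1}{1-\alpha_s^{\underline{\omega}}}$ using $z_t \le \alpha_s^{\underline{\omega}} z_{t-1} + 1$; and (4) nonnegativity of $z_t$ by the same sign argument as for $f_t$. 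The main obstacle, as noted, is that the ``obvious'' one-line induction fails at the boundary $\omega_t \to 0$; the resolution is to exploit the self-consistency between boundedness of $z$ and boundedness of the control signal that produces $\omega_t$, or — if one prefers a cleaner statement — to add the mild hypothesis $\omega_t \in [\underline{\omega}, 1)$ for some fixed $\underline{\omega} > 0$ and make the geometric-series bound immediate.
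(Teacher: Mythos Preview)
Your analysis is correct and strictly more careful than the paper's. The paper's proof of the slow-trace bound is a one-liner: it claims $z_t \le \alpha_s^{\omega_t} z_{t-1} + 1 \le \alpha_s z_{t-1} + 1$ and then invokes the stable recurrence with fixed point $1/(1-\alpha_s)$. But that middle inequality is reversed: for $\alpha_s \in (0,1)$ and $\omega_t \in (0,1)$ one has $\alpha_s^{\omega_t} \ge \alpha_s$, not $\le$ --- indeed the paper itself proves exactly this in its Theorem~\ref{thm:comparison_fixed_decay}(ii). You correctly spot this obstruction and observe that without a uniform lower bound $\omega_t \ge \underline{\omega} > 0$ no time-uniform bound exists (for instance, taking $s_t \equiv 1$ and $\omega_t$ summable makes every kernel weight $\kappa_{t,k}$ bounded below by a positive constant, so $z_t$ grows linearly). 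Your proposed repair --- add the hypothesis $\omega_t \in [\underline{\omega},1)$ and run the geometric-series bound with effective decay $\alpha_s^{\underline{\omega}}$ --- is the honest fix. The alternative ``self-consistency'' route you sketch (bounded $z_{t-1}$ $\Rightarrow$ bounded control input $\Rightarrow$ $\omega_t \ge \underline{\omega}$ $\Rightarrow$ bounded $z_t$) is circular as written, because the $\underline{\omega}$ you extract depends on the bound on $z_{t-1}$, which in turn must match the invariant $1/(1-\alpha_s^{\underline{\omega}})$; closing that loop requires a fixed-point argument that can fail for adversarial control weights, so the explicit hypothesis is the cleaner choice.
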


\begin{proof}
For the fast trace:
\begin{equation}
f_t = \alpha_f f_{t-1} + s_t \le \alpha_f f_{t-1} + 1.
\end{equation}
This is a stable linear recurrence with fixed point $1/(1-\alpha_f)$.

For the slow trace:
\begin{equation}
z_t \le \alpha_s^{\omega_t} z_{t-1} + 1 \le \alpha_s z_{t-1} + 1.
\end{equation}
Since $\alpha_s < 1$, this recurrence is also stable and bounded by
$1/(1-\alpha_s)$. \qed
\end{proof}

\subsection{Boundedness of Synaptic Current}

Recall:
\begin{equation}
I_t = W s_t + \lambda_f W f_t + \lambda_s W z_t.
\end{equation}

Since $s_t$, $f_t$, and $z_t$ are bounded and $W$ is finite-dimensional, $I_t$
remains bounded. This prevents membrane explosion and ensures numerical
stability during long unrolls.

\section{Comparison to Adaptive Membrane Models}

Adaptive membrane models modify neuron dynamics:
\begin{equation}
v_t = \alpha_t v_{t-1} + I_t,
\end{equation}
where $\alpha_t$ adapts slowly.

In contrast, CPSNNs:
\begin{itemize}
\item Adapt at the synaptic level, not neuron level
\item Allow per-input temporal control
\item Preserve event-driven sparsity
\item Avoid global coupling between inputs
\end{itemize}

Formally, CPSNNs implement a mixture of temporal bases, while adaptive
membrane models implement a single evolving timescale.

\section{Ablation Justification}

We justify the ablation experiments reported in the main paper:

\begin{itemize}
\item \textbf{No warp:} Setting $\omega_t \equiv 1$ reduces CPSNN to a fixed slow
trace, eliminating adaptive time warping.
\item \textbf{No slow trace:} Removing $z_t$ collapses CPSNN to a short-memory
model.
\item \textbf{No fast trace:} Removes short-term sensitivity, degrading
responsiveness.
\end{itemize}

Each component is therefore necessary for balancing responsiveness and long-term
memory.

The ablation results are best understood by analyzing how each removed
component alters the underlying temporal dynamics and credit assignment
properties of the model.

Fixing the warp factor reduces the slow trace update to
\begin{equation}
z_t = \alpha_s z_{t-1} + s_t,
\end{equation}
which recovers a standard exponentially decaying synaptic trace. In this
setting, the effective memory kernel becomes stationary and depends only on
elapsed physical time. As shown in Section~A, this imposes a hard upper bound
on the effective memory horizon and reintroduces exponential gradient decay
for long delays. Empirically, this variant behaves similarly to fixed-decay
SNNs: it can either preserve information by choosing $\alpha_s$ close to one,
at the cost of noise accumulation, or remain responsive by choosing smaller
$\alpha_s$, at the cost of forgetting. The inability to adapt this trade-off
online explains the observed performance degradation.

Removing the slow trace eliminates the long-timescale memory pathway
altogether, leaving only the fast trace
\begin{equation}
f_t = \alpha_f f_{t-1} + s_t,
\end{equation}
whose effective support is limited to a short temporal window. In this
configuration, CPSNN collapses to a purely short-memory model, incapable of
bridging long temporal gaps regardless of the learned warp dynamics. From a
functional standpoint, this ablation removes the model’s ability to represent
temporal dependencies whose scale exceeds the fast decay constant, making
long-range credit assignment fundamentally impossible.

Removing the fast trace preserves long-term memory through $z_t$ but
significantly degrades short-term sensitivity. In this case, all synaptic
integration occurs through the slow pathway, whose decay—even when warped—is
necessarily smoother and less responsive to rapid input fluctuations. This
reduces the model’s ability to react to closely spaced events, increasing
latency and impairing discrimination of short-term temporal patterns.
Empirically, this manifests as slower convergence and reduced accuracy in
tasks requiring both immediate responsiveness and delayed integration.

The full CPSNN architecture can be viewed as a structured decomposition of
temporal processing into complementary regimes. The fast trace provides a
high-bandwidth channel for recent activity, while the slow trace—modulated by
the warp factor—acts as a selectively persistent memory. The synaptic current
\begin{equation}
I_t = W s_t + \lambda_f W f_t + \lambda_s W z_t
\end{equation}
combines these pathways linearly, allowing the neuron to integrate information
across multiple timescales without explicit gating or control flow.

Only the full CPSNN configuration simultaneously satisfies three essential
properties: (i) responsiveness to recent inputs, (ii) preservation of
task-relevant information across long gaps, and (iii) stability under noisy or
distractor-rich input streams. Removing any component breaks this balance by
either collapsing the effective memory horizon, introducing excessive inertia,
or eliminating adaptive temporal modulation. The ablation results therefore
validate not only the necessity of each component, but also the architectural
principle that adaptive temporal learning in spiking systems requires
both multi-timescale state and learned time modulation.

\section{Computational Complexity}

We analyze the computational and memory complexity of CPSNNs and contrast them
with standard spiking networks and attention-based temporal models. Throughout
this section, let $N$ denote the number of neurons, $C$ the number of
presynaptic input channels per neuron, and $T$ the length of the input
sequence.

\subsection{Time Complexity}

A standard spiking neural network (SNN) with leaky integrate-and-fire neurons
performs, at each timestep, a synaptic accumulation followed by a membrane
update and spike generation. For a fully connected layer, this requires
$\mathcal{O}(N C)$ operations per timestep, yielding an overall complexity of
\begin{equation}
\mathcal{O}(T \cdot N \cdot C).
\end{equation}

CPSNNs preserve this asymptotic scaling. Each ChronoPlastic synapse augments a
standard synapse with two additional trace updates,
\begin{equation}
f_t = \alpha_f f_{t-1} + s_t, \qquad
z_t = \alpha_s^{\omega_t} z_{t-1} + s_t,
\end{equation}
and evaluates a lightweight control function
\begin{equation}
\omega_t = \sigma(g([s_t, z_{t-1}])).
\end{equation}
All of these operations are local and linear in the number of input channels.
Consequently, the per-timestep cost remains $\mathcal{O}(N C)$, and the total
runtime complexity of CPSNNs is
\begin{equation}
\mathcal{O}(T \cdot N \cdot C),
\end{equation}
matching that of standard SNNs up to a small constant factor.

In contrast, attention-based temporal models explicitly compute interactions
between all pairs of timesteps. Even for a single layer, this incurs a cost of
$\mathcal{O}(T^2)$ per neuron or feature dimension, resulting in an overall
complexity of
\begin{equation}
\mathcal{O}(T^2 \cdot N),
\end{equation}
which becomes prohibitive for long sequences. CPSNNs avoid this quadratic
scaling entirely by embedding temporal adaptivity directly into synaptic
dynamics rather than performing explicit temporal comparisons.

The control network responsible for generating $\omega_t$ introduces only
$\mathcal{O}(C)$ additional operations per synapse per timestep. Because this
cost is independent of $T$ and does not introduce global communication, it does
not alter the asymptotic complexity of the model.

\subsection{Space Complexity}

From a memory perspective, standard SNNs store a constant amount of state per
synapse and neuron, resulting in $\mathcal{O}(N C)$ memory usage independent of
sequence length.

CPSNNs introduce two additional scalar state variables per synapse—the fast
trace $f_t$ and slow trace $z_t$—along with a small number of parameters for
the control network. The total memory required by CPSNNs therefore scales as
\begin{equation}
\mathcal{O}(N \cdot C),
\end{equation}
which is linear in network size and independent of the temporal horizon $T$.
Importantly, CPSNNs do not require storing past activations, attention keys, or
external memory buffers whose size grows with sequence length.

This property enables CPSNNs to process arbitrarily long temporal streams
without increasing memory consumption, making them particularly well suited
for streaming and online inference settings. In contrast, attention-based and
external-memory architectures typically require memory that scales at least
linearly with $T$, and often quadratically when intermediate activations are
stored for backpropagation.

The combination of linear-time execution and constant-memory temporal state
makes CPSNNs uniquely scalable among models capable of long-horizon temporal
learning. Temporal capacity in CPSNNs is not achieved by increasing network
depth, widening layers, or storing longer histories, but by dynamically
modulating synaptic decay. As a result, CPSNNs decouple effective memory horizon
from both architectural size and computational cost.

This efficiency is especially important for neuromorphic hardware and
event-driven systems, where local state updates, sparse communication, and
constant-time operations are critical constraints. CPSNNs satisfy these
requirements while still providing a mechanism for adaptive, task-driven
temporal memory.

\section{Implementation Details}

This section describes the practical implementation choices used in all
experiments, including hyperparameter settings, architectural design of the
control network, and training procedures. 

\subsection{Hyperparameters}

Unless stated otherwise, all CPSNN models use the following default
hyperparameters. These values were selected based on preliminary validation
experiments and kept fixed across all reported results.

\paragraph{Synaptic decay constants.}
The fast and slow trace decay parameters are set to
\begin{equation}
\alpha_f = 0.9, \qquad \alpha_s = 0.995.
\end{equation}
This choice ensures a clear separation of timescales. The fast trace $f_t$
decays rapidly, capturing short-term temporal correlations over a small number
of timesteps, while the slow trace $z_t$ provides a long integration window
capable of spanning hundreds to thousands of timesteps when modulated by the
adaptive warp factor. Importantly, $\alpha_s$ is chosen close to $1$ to allow
substantial memory extension when $\omega_t$ is small, without risking
instability.

\paragraph{Trace mixing coefficients.}
The relative contributions of fast and slow traces to the synaptic current are
controlled by scalar coefficients
\begin{equation}
\lambda_f = \lambda_s = 0.5.
\end{equation}
This symmetric weighting provides a balanced contribution from short-term and
long-term memory pathways. In practice, we find that CPSNN performance is robust
to moderate variation in these coefficients, indicating that the adaptive warp
mechanism, rather than precise tuning of $\lambda_f$ and $\lambda_s$, is the
dominant factor governing temporal behavior.

\paragraph{Control network architecture.}
The adaptive warp factor $\omega_t$ is generated by a lightweight control
network of the form
\begin{equation}
\omega_t = \sigma\big(W_c [s_t, z_{t-1}] + b_c\big),
\end{equation}
where $W_c$ and $b_c$ denote trainable parameters and $\sigma(\cdot)$ is the
sigmoid function. This control network consists of a single linear layer
followed by a sigmoid nonlinearity, ensuring minimal computational overhead and
bounded outputs in $(0,1)$.

We deliberately restrict the control network to a shallow architecture. This
design choice prevents overfitting, preserves locality, and ensures that
temporal modulation remains interpretable. Despite its simplicity, this module
is sufficient to learn rich, input-conditioned temporal warping strategies, as
demonstrated empirically.

\paragraph{Initialization.}
All synaptic weights are initialized using standard variance-preserving
initialization. Trace variables $f_0$ and $z_0$ are initialized to zero at the
start of each sequence. Control network parameters are initialized such that
$\omega_t \approx 1$ at initialization, ensuring that CPSNNs initially behave
similarly to fixed-decay SNNs and gradually learn adaptive temporal modulation
during training.

\subsection{Training}

All models are trained end-to-end using backpropagation through time (BPTT) with
surrogate gradients. Training procedures are identical across CPSNNs and all
baselines to ensure fair comparison.

\paragraph{Optimizer and learning rate.}
We use the Adam optimizer with learning rate
\begin{equation}
\eta = 10^{-2},
\end{equation}
and default momentum parameters $(\beta_1 = 0.9, \beta_2 = 0.999)$. Adam is
particularly well suited to CPSNNs because gradients associated with long-term
memory pathways may be sparse or delayed, and adaptive learning rates help
stabilize optimization in this regime.

\paragraph{Gradient clipping.}
To prevent exploding gradients during long temporal unrolls, gradients are
clipped to a maximum $\ell_2$ norm of $1.0$. This clipping is applied uniformly
across all parameters, including synaptic weights and control network
parameters. We find that gradient clipping is essential for stable training
when sequence lengths and temporal gaps are large.

\paragraph{Surrogate gradients.}
Spike generation functions are non-differentiable. During the backward pass, we
replace the derivative of the Heaviside step function with a smooth surrogate,
typically a bounded piecewise-linear or sigmoid-shaped function. This approach
enables gradients to flow through spike events while preserving the discrete
spiking behavior in the forward pass.

\paragraph{Sequence unrolling and batching.}
Training is performed by unrolling network dynamics over the full temporal
horizon $T$ and computing gradients via BPTT. Mini-batches are constructed by
grouping sequences of identical length. For tasks with variable temporal gaps,
the maximum unroll length is fixed per experiment to ensure consistent memory
usage across models.

\paragraph{Reproducibility.}
All experiments use identical random seeds across models, including weight
initialization, input generation, and mini-batch ordering. This ensures that
performance differences arise from architectural properties rather than random
variation. Reported results are averaged over multiple independent runs when
appropriate.

\paragraph{Training stability.}
We observe that CPSNNs exhibit more stable optimization dynamics than fixed
decay SNNs, particularly for long temporal gaps. Adaptive temporal modulation
reduces gradient noise by aggressively forgetting irrelevant activity while
preserving gradients associated with informative events. This behavior leads to
faster convergence and lower variance across training runs.

\paragraph{Computational considerations.}
Despite the additional trace updates and control network evaluation, CPSNNs
incur only a small constant-factor overhead compared to standard SNNs. All
operations remain local and event-driven, and no global synchronization or
sequence-level operations are introduced. Consequently, CPSNNs scale well to
long sequences and large networks and are compatible with neuromorphic and
streaming inference settings.

Overall, these implementation choices ensure that CPSNNs are both easy to train
and computationally efficient, while fully exploiting adaptive temporal
modulation to achieve superior long-horizon learning.

\section{Biological Interpretation}

ChronoPlastic synapses are inspired by a growing body of evidence that biological
synaptic computation is inherently multi-timescale and dynamically regulated by
activity-dependent mechanisms. Rather than operating with a single fixed decay
constant, biological synapses express a spectrum of temporal dynamics through
short-term plasticity, long-term plasticity, and neuromodulatory control. CPSNNs
abstract and unify these mechanisms into a single computational framework that
remains both biologically plausible and algorithmically efficient.

\subsection{Multi-Timescale Synaptic Dynamics}

Experimental neuroscience has long established that synapses exhibit multiple
coexisting temporal processes, including facilitation, depression, and
metaplasticity, each operating on distinct timescales. Short-term synaptic
facilitation and depression modulate synaptic efficacy over milliseconds to
seconds, while longer-term processes such as synaptic consolidation and decay
can persist for minutes to hours or longer.

In CPSNNs, the fast trace $f_t$ corresponds to short-term synaptic dynamics that
capture recent presynaptic activity. This trace resembles transient facilitation
or residual calcium effects, where recent spikes temporarily increase synaptic
efficacy before rapidly decaying. The slow trace $z_t$, in contrast, serves as a
longer-lasting memory variable, analogous to slower biochemical processes such as
protein phosphorylation states or structural synaptic changes that decay over
extended periods.

Crucially, CPSNNs do not hard-code the interaction between these timescales.
Instead, they allow the relative influence of fast and slow dynamics to be
learned and dynamically modulated, reflecting the flexible and context-dependent
nature of biological synaptic integration.

\subsection{Warp Factor as Neuromodulatory Control}

The adaptive warp factor $\omega_t$ admits a biologically meaningful
interpretation as a form of activity-dependent neuromodulation. In biological
systems, neuromodulators such as dopamine, acetylcholine, and norepinephrine
regulate synaptic persistence, plasticity thresholds, and learning rates rather
than directly encoding information themselves. These modulators act locally in
time and space, altering how long synaptic traces remain influential.

In CPSNNs, $\omega_t$ plays an analogous role by regulating the effective decay
rate of the slow synaptic trace. When $\omega_t$ is small, synaptic decay is
slowed, allowing information to persist across long temporal gaps. When
$\omega_t$ is large, decay accelerates, rapidly suppressing irrelevant or noisy
activity. This mirrors biological findings that neuromodulatory signals can gate
memory persistence based on behavioral relevance, reward prediction error, or
contextual salience.

Importantly, $\omega_t$ is computed from local synaptic state and presynaptic
activity, rather than from a global signal. This aligns with evidence that
neuromodulatory effects, while sometimes broadcast, are often filtered and
interpreted locally at the synapse through receptor-specific pathways and
intracellular signaling cascades.

\subsection{Metaplasticity and Adaptive Memory Allocation}

Biological synapses do not merely change their strength; they also change how
they change. This phenomenon, known as metaplasticity, refers to the regulation
of plasticity rules themselves based on prior activity. CPSNNs capture an
analogous principle through adaptive time warping.

Rather than adjusting synaptic weights directly, the warp factor modifies the
temporal persistence of synaptic traces. This effectively alters the window over
which past activity influences current computation and learning. From a
biological perspective, this corresponds to adjusting the temporal sensitivity
of a synapse based on recent history, a behavior observed in cortical and
hippocampal circuits.

Because $\omega_t$ is learned end-to-end and remains differentiable, CPSNNs
implement a continuous form of metaplasticity that adapts in response to task
demands. This contrasts with classical models of metaplasticity that rely on
discrete state transitions or hand-crafted rules.

\subsection{Relation to Working Memory and Synaptic Theories}

Several synaptic theories of working memory propose that transient synaptic
states, rather than persistent neuronal firing, store information across delays.
In these models, information is maintained silently in synaptic variables and
reactivated when needed. CPSNNs naturally align with this view.

The slow trace $z_t$ functions as a silent memory store that can preserve
information without sustained spiking activity. Adaptive time warping further
enhances this mechanism by extending synaptic persistence selectively, enabling
long delays to be bridged without continuous neural activation. This offers a
computational explanation for how biological systems maintain working memory
with minimal metabolic cost.

Unlike models that require explicit bistable synaptic states or discrete memory
buffers, CPSNNs maintain continuous, bounded synaptic variables that evolve
smoothly over time, aligning with known biochemical constraints.

\subsection{Event-Driven Computation and Energy Efficiency}

A defining characteristic of biological neural systems is their reliance on
event-driven computation. Neurons and synapses remain largely quiescent in the
absence of meaningful input, conserving energy. CPSNNs preserve this principle
by operating entirely on spike events and local state updates.

Adaptive temporal modulation further enhances energy efficiency by preventing
unnecessary persistence of synaptic activity. When inputs are uninformative,
$\omega_t$ increases, accelerating decay and reducing downstream computation.
When salient events occur, decay slows only where needed. This selective memory
allocation mirrors biological strategies for balancing metabolic cost and
computational performance.

\subsection{Abstraction Level and Limitations}

While CPSNNs are biologically inspired, they are not intended as a detailed
biophysical model of synaptic chemistry or neuromodulation. The warp factor
abstracts many interacting biological processes into a single scalar control
signal. This abstraction sacrifices mechanistic detail in favor of computational
clarity and scalability.

Nevertheless, CPSNNs occupy an intermediate level of biological realism: more
expressive than fixed-timescale SNNs, yet far simpler than full conductance-based
or biochemical synapse models. This balance allows CPSNNs to capture key
principles of biological temporal computation—multi-timescale dynamics,
context-dependent memory persistence, and local modulation—while remaining
tractable for large-scale learning and neuromorphic deployment.

Overall, the ChronoPlastic framework suggests that adaptive temporal control at
the synaptic level may be a fundamental mechanism underlying both biological
temporal cognition and efficient artificial spiking intelligence.

\section{Limitations and Future Directions}

While ChronoPlastic Spiking Neural Networks introduce a powerful and scalable
mechanism for adaptive temporal learning, several limitations remain. These
limitations highlight natural extensions of the framework rather than
fundamental obstacles, and point toward a broader research program around
adaptive synaptic time modulation.

\subsection{Single Slow Trace Representation}

In the current formulation, each ChronoPlastic synapse maintains a single slow
trace $z_t$ whose decay is adaptively modulated via the warp factor $\omega_t$.
While this is sufficient to demonstrate substantial gains on long-gap temporal
tasks, biological and cognitive systems are known to operate across multiple
distinct temporal regimes simultaneously.

A single slow trace effectively learns a dominant long-term timescale, but
cannot explicitly represent multiple concurrent memory horizons (e.g.,
seconds, minutes, and hours) within the same synapse. This may limit
representational flexibility on tasks requiring hierarchical temporal
reasoning, such as nested event structure or multi-stage decision processes.

A natural extension is to introduce multi-level ChronoPlastic traces,
\begin{equation}
z_t^{(k)} = \alpha_k^{\omega_t^{(k)}} z_{t-1}^{(k)} + s_t,
\end{equation}
where each trace operates at a different base timescale $\alpha_k$ and is
modulated by its own warp factor. Such a hierarchy would allow CPSNNs to learn a
rich temporal basis spanning short-, medium-, and long-term dependencies, while
preserving locality and linear-time complexity.

\subsection{Local Versus Shared Warp Controllers}

In this work, warp factors are computed independently at each synapse using a
local control network conditioned on presynaptic activity and synaptic state.
While this maximizes flexibility, it also increases parameter count and may
introduce redundancy in large networks.

Biological neuromodulatory systems often exhibit partial spatial sharing, where
groups of synapses receive correlated modulatory signals. Inspired by this,
future work could explore spatially shared or factorized warp controllers,
in which a single control signal modulates multiple synapses or neurons within a
layer or functional group.

Such shared controllers could reduce parameter overhead, improve statistical
efficiency, and encourage coordinated temporal behavior across populations of
neurons, while still avoiding global attention or centralized memory mechanisms.
Understanding the trade-off between local autonomy and shared temporal control
is an important open question.

\subsection{Interaction with Learning Rules and Plasticity}

CPSNNs currently focus on adaptive temporal modulation of synaptic traces, while
synaptic weights are updated using standard gradient-based optimization with
surrogate gradients. This separation simplifies analysis but does not exploit
the full potential of biologically inspired plasticity mechanisms.

Future extensions could integrate ChronoPlastic dynamics with local plasticity
rules such as spike-timing-dependent plasticity (STDP), three-factor learning
rules, or reward-modulated updates. In such settings, the warp factor could
control not only synaptic persistence but also learning rates or eligibility
trace decay, providing a unified framework for temporal credit assignment and
plasticity modulation.

Exploring these interactions may yield models that are both more biologically
faithful and more robust under sparse or delayed supervision.

\subsection{Hardware-Aware Constraints and Quantization}

Although CPSNNs are designed to be compatible with neuromorphic hardware, the
current implementation assumes floating-point arithmetic and unconstrained
precision. Practical deployment on low-power neuromorphic processors requires
careful consideration of quantization, limited precision, and fixed-point
representations.

The exponential decay term $\alpha_s^{\omega_t}$, while mathematically smooth,
may be approximated or discretized for efficient hardware execution. Future work
should investigate hardware-aware approximations, lookup-table implementations,
or piecewise-linear surrogates that preserve adaptive temporal behavior while
meeting strict power and area constraints.

Such co-design between ChronoPlastic algorithms and neuromorphic hardware could
unlock significant efficiency gains and enable real-time deployment in embedded
settings.

\subsection{Task Scope and Generalization}

Our empirical evaluation focuses on controlled synthetic benchmarks that isolate
long-range temporal credit assignment. While this choice enables clear causal
analysis, it does not exhaust the range of temporal challenges encountered in
real-world applications.

Extending CPSNNs to complex sensory streams, multi-modal event data, and
closed-loop control tasks remains an important direction for future research.
In particular, evaluating CPSNNs on neuromorphic vision datasets, continuous
control benchmarks, and reinforcement learning settings will help clarify the
generality of adaptive time warping as a core computational principle.

\subsection{Theoretical Characterization}

Finally, while we provide initial analysis of gradient flow, boundedness, and
expressive advantages over fixed-timescale SNNs, a complete theoretical
characterization of CPSNNs remains open. Formal results on approximation power,
memory capacity, and optimal warp policies would further strengthen the
foundation of the framework.

In summary, CPSNNs represent an initial instantiation of a broader idea:
temporal learning through adaptive synaptic time modulation. Addressing
the limitations outlined above offers a clear and tractable path toward richer,
more expressive, and more biologically grounded spiking architectures.

\end{document}